\documentclass{article}

\PassOptionsToPackage{numbers,sort}{natbib}
\usepackage[preprint]{style}

\usepackage{amssymb}
\usepackage[utf8]{inputenc} % allow utf-8 input
\usepackage[T1]{fontenc}    % use 8-bit T1 fonts
\usepackage{hyperref}       % hyperlinks
\usepackage{url}            % simple URL typesetting
\usepackage{booktabs}       % professional-quality tables
\usepackage{amsfonts}       % blackboard math symbols
\usepackage{nicefrac}       % compact symbols for 1/2, etc.
\usepackage{microtype}      % microtypography
\usepackage{xcolor}         % colors
\usepackage{graphicx}
\usepackage{amsmath}
\usepackage{algorithm}
\usepackage{algpseudocode}
\usepackage{amsthm} 
\usepackage{wrapfig}
\usepackage{enumitem}

\usepackage{booktabs} 
\usepackage{multirow}   
\usepackage{xcolor}     
\usepackage{colortbl}
\usepackage{subcaption}

\definecolor{highlightblue}{RGB}{229, 241, 255} 
\definecolor{lgray}{RGB}{192, 192, 192}

\newtheorem{asmp}{Assupmtion}

\newtheorem{theorem}{Theorem}[section] 
\newtheorem{lemma}[theorem]{Lemma}
\newtheorem{proposition}[theorem]{Proposition}

\algnotext{EndFor}

\title{LENS: Low-Frequency Eigen Noise Shaping for Efficient Diffusion Sampling}

\author{
\begin{tabular}{cc}
Haewon Jeon & Si-Hyeon Lee \\
\end{tabular}
\\[0.5em]
School of Electrical Engineering \\
Korea Advanced Institute of Science and Technology (KAIST) \\
Daejeon, South Korea \\
\texttt{\{haewon, sihyeon\}@kaist.ac.kr}
}

\begin{document}

\maketitle

\begin{abstract}
    Distilled diffusion models accelerate image generation by reducing the number of denoising steps, but often suffer from degraded image quality. To mitigate this trade-off, test-time optimization methods improve quality, yet their iterative nature incurs substantial computational overhead and leads to slow inference, limiting practical usability. Recent hypernetwork-based approaches amortize this process during training, but still require costly noise modulation in high-dimensional latent spaces. In this work, we propose \textbf{LENS} (Low-frequency Eigen Noise Shaping), an efficient noise modulation framework that operates in a low-dimensional subspace. Our approach is motivated by the observation that low-frequency components of the noise largely determine the global structure and visual fidelity of generated images. Based on this observation, we provide a theoretical justification for restricting modulation to the low-frequency subspace and derive a principled training objective. Building on this, LENS employs a lightweight, standalone network to selectively modulate these components, enabling efficient and targeted noise modulation. Extensive experiments demonstrate that LENS achieves competitive image quality while reducing FLOPs by 400–700$\times$, model parameters by 25–75$\times$, and inference-time overhead by 10–20$\times$ compared to prior methods.
\end{abstract}

\section{Introduction}

Diffusion models have achieved remarkable success in high-quality text-to-image generation~\cite{zhang2023text, rombach2022high}. These models generate images by iteratively denoising an initial noise sample, but achieving high fidelity typically requires a large number of sampling steps, resulting in high computational cost and slow inference~\cite{sehwag2025stretching, esser2024scaling}. To improve efficiency, distilled diffusion models have been developed to enable generation in a single or a few steps~\cite{song2023consistency, yin2024one, xie2024distillation}. However, this efficiency often comes at the expense of generation quality, leading to an inherent trade-off between speed and fidelity~\cite{xie2024tlcm}. This motivates methods that enhance generation quality in distilled models without sacrificing their efficiency.

Recent work has explored inference-time techniques that improve generation quality by performing additional computations during inference while keeping model parameters fixed~\cite{uehara2025reward, uehara2025inference}. Among these, a prominent direction is to optimize the initial noise, motivated by the observation that it governs the global structure of the generated image~\cite{xu2025good, mao2024lottery, lyu2025diff, ban2024crystal, li2025enhancing}. Conventional approaches perform test-time optimization, iteratively refining the noise for each sample~\cite{guo2024initno, eyring2024reno, karunratanakul2024optimizing, ben2024d, tang2024tuningfree, miao2025noise, wallace2023end, novack2024ditto}, which incurs substantial computational overhead.

To mitigate this, recent methods amortize the optimization by learning a prompt-conditioned mapping from noise to improved noise~\cite{li2025noisear, ahn2026a, zhou2025golden, eyring2025noise}. HyperNoise~\cite{eyring2025noise} is a representative approach that predicts noise perturbations by reusing the diffusion backbone augmented with lightweight adaptation modules. Specifically, the initial noise is first passed through the Low-Rank Adaptation (LoRA)~\cite{hu2022lora}-augmented diffusion model to generate a modified noise representation, which is then fed into the original diffusion model for image synthesis. As a result, noise modulation requires executing the diffusion backbone itself, both during training (for learning the perturbation mapping) and at inference time (for each sample), leading to non-negligible computational overhead despite improvements over iterative optimization methods. Moreover, this process operates in the full high-dimensional latent space, introducing unnecessary degrees of freedom and further limiting efficiency.

Meanwhile, prior studies suggest that low-frequency components of the noise primarily determine global structure, whereas high-frequency components mainly affect fine details~\cite{wang2025seeds}. This insight indicates that effective noise modulation can be achieved in a low-dimensional subspace.

Motivated by these observations, we propose \textbf{Low-frequency Eigen Noise Shaping (LENS)}, a framework that performs efficient noise modulation by operating on low-frequency coefficients of the latent noise. By restricting modulation to this low-dimensional subspace, LENS enables the use of a lightweight network that is decoupled from the diffusion backbone, significantly reducing computational cost while preserving control over key structural attributes of generated images. As illustrated in Figure~\ref{intro_figure}, LENS achieves a favorable performance--efficiency trade-off, maintaining competitive generation quality with negligible additional inference cost.

\begin{figure} 
    \centerline{\includegraphics[width=\columnwidth]{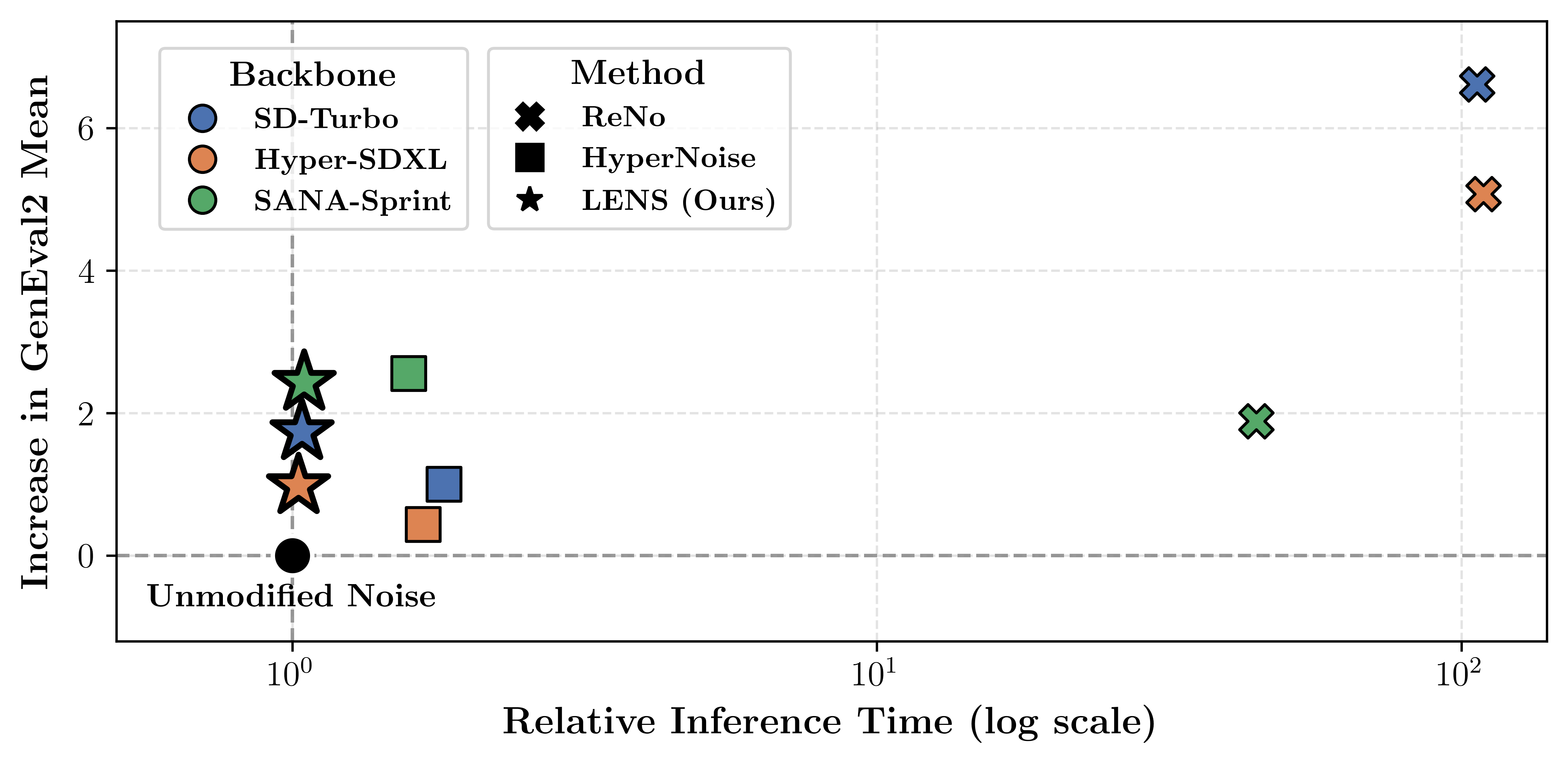}}
\caption{\textbf{Performance vs. inference time relative to unmodified noise input.} The x-axis shows inference time normalized by the case where the input noise is sampled directly from a standard Gaussian distribution without any modification, plotted in log scale. The y-axis shows the increase in GenEval2~\cite{kamath2025geneval} mean over this reference. Hence, the point $(1, 0)$ corresponds to this unmodified noise setting. 
We compare three approaches: ReNo~\cite{eyring2024reno}, which performs iterative test-time optimization of the input noise; HyperNoise~\cite{eyring2025noise}, which learns a noise modulation network using the diffusion backbone; and the proposed LENS. Results are shown across three distilled diffusion backbones: SD-Turbo~\cite{sauer2024adversarial}, Hyper-SDXL~\cite{ren2024hyper}, and SANA-Sprint~\cite{chen2025sana}. LENS achieves comparable or improved quality relative to HyperNoise with reduced inference time, while ReNo attains higher quality at a substantially higher computational cost.}
    \label{intro_figure}
\vspace{-5pt}
\end{figure}

Our contributions are summarized as follows:
\begin{itemize}[leftmargin=20pt]
\item We propose LENS, a novel framework for efficient noise modulation in distilled diffusion models via optimization in a low-frequency subspace. By eliminating the need to reuse the diffusion backbone and reducing the effective degrees of freedom during training, LENS enables more efficient and stable learning. 
\item We provide theoretical justification for focusing on low-frequency components and derive a principled training objective in the corresponding coefficient space.

\item We demonstrate that LENS achieves comparable or improved generation quality while significantly reducing computational cost, with 400–700$\times$ fewer FLOPs, 25–75$\times$ fewer parameters, and 10–20$\times$ faster noise modulation.
\end{itemize}

%\newpage
\section{Related Work}
\label{related work}

\paragraph{Distilled Diffusion Models.}
Diffusion-based generative models produce high-quality images by iteratively denoising an initial noise sample $z_0 \sim \mathcal{N}(0, I)$ \cite{ho2020denoising, song2020denoising}. However, this process typically requires many sampling steps, resulting in high computational cost \cite{salimans2022progressive}. To address this limitation, recent advances in diffusion model distillation, including SD-Turbo \cite{sauer2024adversarial}, Hyper-SDXL \cite{ren2024hyper}, and SANA-Sprint \cite{chen2025sana}, have drastically reduced the number of sampling steps. Despite these improvements, such efficiency gains often come at the expense of generation quality, with distilled models typically underperforming their full diffusion counterparts. This trade-off motivates inference-time techniques that improve generation quality while retaining the efficiency of distilled models.

\paragraph{Noise Hypernetworks.} 
Recently, a novel approach has been proposed to improve generation quality by modulating the initial noise via a noise hypernetwork, enabling faster inference than existing iterative test-time optimization methods \cite{guo2024initno, eyring2024reno, karunratanakul2024optimizing, ben2024d}. Given a pretrained generator $G_\theta$ that maps an initial noise sample $z_0$ to an output image $x$, i.e., $x = G_\theta(z_0)$, let $r(x)$ be a reward function that quantifies the quality of generated images. HyperNoise \cite{eyring2025noise} considers a reward-tilted noise distribution $p^\star$ that biases sampling toward noise realizations yielding high-reward outputs:
\begin{equation} \label{eqn:hypernoise_distribution}
p^\star(z_0) \propto p(z_0) \exp\big(r(G_\theta(z_0))\big),
\end{equation}
where $p$ denotes the original noise distribution. HyperNoise trains a noise hypernetwork $f_\phi$ with parameters $\phi$ such that the distribution of the modulated noise sample $z_0 + f_\phi(z_0)$, denoted by $p^\phi$, is close to the target distribution $p^\star$.  Accordingly, $f_\phi$ is trained to minimize the KL divergence $D_{\mathrm{KL}}(p^\phi\| p^\star)$, which can be approximated (up to a constant) under mild conditions by the following loss \cite{eyring2025noise}: 
\begin{equation}
    \mathcal{L}(\phi) = \mathbb{E}_{z_0 \sim p} \left[ \frac{1}{2} \| f_\phi(z_0) \|^2 - r(G_\theta(z_0+f_\phi(z_0))) \right], 
\end{equation}
which consists of a regularization term that penalizes deviation from the original noise distribution and a reward maximization term that encourages high-quality outputs. This formulation amortizes the search for high-reward noise into a learned mapping.
%enabling efficient inference-time optimization.

However, noise modulation remains computationally inefficient in both training and inference, since each forward pass requires executing a large-scale diffusion backbone. Furthermore, operating in the full latent noise space introduces unnecessary degrees of freedom, leading to reduced efficiency.

\paragraph{Patch-wise PCA Analysis of Diffusion Noise.}
Recently, an intriguing result has demonstrated a strong correlation between the PCA components of initial noise in diffusion models and the structure of the generated outputs \cite{wang2025seeds}. In this study, the noise is decomposed into $p \times p$ patches and projected onto a PCA basis derived from natural image statistics. It is shown that the top principal components (i.e., low-frequency components) largely determine key attributes of the generated images, such as global structure and object layout. In contrast, modifying high-frequency components has minimal impact on these structural properties. This observation motivates a more efficient noise modulation strategy that focuses solely on shaping the low-frequency components.

\section{Method}

We propose a novel framework, \textbf{Low-frequency Eigen Noise Shaping (LENS)}, for efficient diffusion sampling via noise modulation. LENS employs a lightweight network operating on low-frequency noise coefficients to predict modulation, without relying on the diffusion backbone. It improves generation quality while maintaining low computational overhead in both training and inference.

%We propose a novel framework \textbf{Low-frequency Eigen Noise Shaping (LENS)} that retains the key benefit of HyperNoise \cite{eyring2025noise}—improving generation quality with low inference overhead via a pretrained noise hypernetwork—while significantly improving training efficiency by restricting optimization to low-frequency noise components.

\subsection{Theoretical Foundations and Objective}

In the following, we introduce our method and provide theoretical justification based on prior findings that the low-frequency components of the initial noise predominantly determine the structural properties of the generated outputs \cite{wang2025seeds}. We consider a pretrained diffusion generator $G_\theta$ that maps an initial noise sample $z_0 \in \mathbb{R}^{C \times H \times W}$ to an output image $x \in \mathbb{R}^{C \times H \times W}$, i.e., $x = G_\theta(z_0)$, where $C$, $H$, and $W$ denote the number of channels, height, and width, respectively. We assume $z_0\sim \mathcal{N}(0, I)$. Let $r(x)$ be a reward function that quantifies the quality of generated images. In practice, we also condition on a text prompt, and $r(x)$ reflects the alignment of $x$ with the given prompt. 

\paragraph{Noise Reparameterization in the PCA Coefficient Space.}
We partition the initial latent noise $z_0$ into $N$ non-overlapping patches $\{s_i\}_{i=1}^N$, where each patch $s_i \in \mathbb{R}^d$ with $d = C s^2$, is obtained by extracting a $s \times s$ spatial region across all channels and vectorizing it. Hence, $N = \frac{HW}{s^2}$. 

Each patch is projected onto a PCA basis $V \in \mathbb{R}^{d \times d}$ computed from natural image statistics, resulting in the coefficient vector $w_i = V^\top s_i \in \mathbb{R}^d$. Since $V$ is an orthonormal matrix, this transformation preserves the Gaussian prior, i.e., $w_i \sim \mathcal{N}(0, I)$. A detailed proof is provided in Appendix~\ref{proof_gaussian}. We further decompose the coefficient vector as $w_i = (w_{i,L}, w_{i,H})$, where $w_{i,L} \in \mathbb{R}^k$ denotes the low-frequency components and $w_{i,H} \in \mathbb{R}^{d-k}$ denotes the high-frequency components, with $k < d$.  Let $w \in \mathbb{R}^{N \times d}$, $w_L \in \mathbb{R}^{N \times k}$, and $w_H \in \mathbb{R}^{N \times (d - k)}$ denote the collections $\{w_i\}_{i=1}^N$, $\{w_{i,L}\}_{i=1}^N$, and $\{w_{i,H}\}_{i=1}^N$, respectively. Then, their corresponding prior distributions, denoted by $q(w)$, $q_L(w_L)$, and $q_H(w_H)$, all follow isotropic Gaussian distributions $\mathcal{N}(0, I)$ with appropriate dimensions, as shown in Appendix~\ref{proof_gaussian2}.

\paragraph{Low-frequency-only Tilted Distribution.} Let $\textbf{Recon}(w)$ denote the reconstruction operator that maps patch-wise PCA coefficients to the full latent noise. The generator can be expressed in the coefficient space as $g(w) := G_\theta(\textbf{Recon}(w))$. We define a reward-tilted target distribution in the coefficient space as
\begin{equation}
q^\star(w) \propto q(w)\exp\big(r(g(w))\big) 
\end{equation}

Prior empirical results from patch-wise PCA analysis \cite{wang2025seeds} show that perturbations applied to the low-frequency components of the noise have a dominant effect on the composition of generated images. Based on this observation, we make the following assumption, which is empirically validated in Section~\ref{experimental_results}.
\begin{asmp} \label{assumption} For any $(w_L, w_H)$,
\begin{equation} 
\left| r\big(g(w_L, w_H)\big) - \bar{r}(w_L) \right| \leq \epsilon,
\end{equation}
where $\bar{r}(w_L) := \mathbb{E}_{w_H \sim q_H}[\,r(g(w_L, w_H))\,]$ denotes the expected reward conditioned on the low-frequency coefficients and $\epsilon > 0$ is a small constant that bounds the contribution of the high-frequency components to the reward.
\end{asmp}

We define the low-frequency-only tilted distribution as
\begin{equation}
    \tilde{q}^\star(w) := \tilde{q}^\star_L(w_L)\, q_H(w_H), 
    \quad 
    \tilde{q}^\star_L(w_L) \propto q_L(w_L)\exp(\bar{r}(w_L)). 
\end{equation}
The following proposition provides theoretical justification for approximating $q^\star(w)$ by $\tilde{q}^\star(w)$, whose proof is in Appendix \ref{proof_proposition1}. 
\begin{proposition} \label{proposition1}
The KL divergence between the full tilted distribution $q^\star(w)$ and its low-frequency-only tilted distribution $\tilde{q}^\star(w)$ is bounded as
\begin{equation}
    D_{\mathrm{KL}}(q^\star \parallel \tilde{q}^\star) \leq 2\epsilon.
\end{equation}
\end{proposition}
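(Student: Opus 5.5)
The plan is to write both densities explicitly with their normalizing constants and then bound the log-ratio pointwise. Since $q(w) = q_L(w_L)\,q_H(w_H)$ (established in Appendix~\ref{proof_gaussian2}), we have
\begin{equation*}
q^\star(w) = \frac{q_L(w_L)\,q_H(w_H)\exp\big(r(g(w))\big)}{Z}, \qquad
\tilde{q}^\star(w) = \frac{q_L(w_L)\,q_H(w_H)\exp\big(\bar{r}(w_L)\big)}{\tilde{Z}},
\end{equation*}
where $Z = \mathbb{E}_{w\sim q}[\exp(r(g(w)))]$ and $\tilde{Z} = \mathbb{E}_{w_L\sim q_L}[\exp(\bar{r}(w_L))]$. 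Here $\tilde{Z}$ also equals $\mathbb{E}_{w\sim q}[\exp(\bar r(w_L))]$, because $q_H$ integrates to one. The Gaussian priors cancel in the ratio, which leaves
\begin{equation*}
\log\frac{q^\star(w)}{\tilde{q}^\star(w)} = \big(r(g(w)) - \bar{r}(w_L)\big) - \log\frac{Z}{\tilde{Z}}.
\end{equation*}

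Next I will bound each of the two terms by $\epsilon$ in absolute value using Assumption~\ref{assumption}. The first term is at most $\epsilon$ in absolute value directly from the assumption. For the second term, the assumption gives the pointwise sandwich
\begin{equation*}
e^{-\epsilon}\exp(\bar r(w_L)) \le \exp(r(g(w))) \le e^{\epsilon}\exp(\bar r(w_L)).
\end{equation*}
Integrating this against $q$ yields $e^{-\epsilon}\tilde Z \le Z \le e^{\epsilon}\tilde Z$, and hence $|\log(Z/\tilde Z)| \le \epsilon$.

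Combining the two bounds, the log-ratio is at most $2\epsilon$ in absolute value pointwise. Taking the expectation under $q^\star$ then gives $D_{\mathrm{KL}}(q^\star\|\tilde q^\star) = \mathbb{E}_{q^\star}[\log(q^\star/\tilde q^\star)] \le 2\epsilon$, which is the claim.

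I do not expect a serious obstacle. The main point needing care is the normalization step. One must note that $q$ factorizes, so $\tilde q^\star$ is a genuine density with normalizer $\tilde Z$ computed against the same base measure as $Z$. One should also check that both $Z$ and $\tilde Z$ are finite. This follows as long as the rewards are integrable under the Gaussian prior, which the sandwich bound reduces to the finiteness of $\tilde Z$ (for instance, this holds for bounded rewards). With these points in place the argument is just two applications of the uniform bound. It even gives the sharper refinement $\mathbb{E}_{q^\star}[r(g(w))-\bar r(w_L)] - \log(Z/\tilde Z)$, but I will stop at the crude $2\epsilon$.
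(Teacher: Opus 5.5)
Your proof is correct and takes essentially the same route as the paper. You write both tilted densities with normalizers $Z,\tilde Z$, reduce the log-ratio to $\delta(w)+\log(\tilde Z/Z)$, and bound each term by $\epsilon$ via the sandwich $e^{-\epsilon}\le e^{\delta}\le e^{\epsilon}$. You also make explicit the use of $q=q_Lq_H$ to write $\tilde q^\star(w)\propto q(w)\exp(\bar r(w_L))$, which the paper uses without comment, and bounding the log-ratio pointwise before taking the $q^\star$-expectation is equivalent to the paper's order.
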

%\hw{This implies that optimizing low-frequency components yields a principled approximation to the full objective when the reward depends weakly on high-frequency details.}

\paragraph{Objective.} 
We train a neural network $h_\phi$ with parameters $\phi$ that takes $w_L$ as input and outputs an additive perturbation to $w_L$. In practice, $h_\phi$ is also conditioned on a text prompt, but we omit this dependence for notational simplicity. Let $q_L^\phi$ denote the distribution of the modulated low-frequency coefficients $w_L + h_\phi(w_L)$. The network $h_\phi$ is trained so that $q_L^\phi$ approximates the target distribution $\tilde{q}_L^\star$. To this end, we minimize the KL divergence $D_{\mathrm{KL}}(q^{\phi}_L \| \tilde{q}^\star_L)$. The following proposition shows that this objective admits a tractable reformulation. The proof is provided in Appendix~\ref{proof_proposition2}.

%We train a neural network $h_\phi$ with parameters $\phi$. \hw{For notational simplicity, we write $h_\phi(w_L)$, while in practice the mapping is conditioned on a text prompt, producing coefficient modulations aligned with the prompt.} The distribution of the modulated low-frequency coefficients $w_L + h_\phi(w_L)$, denoted by $q_L^\phi$, approximates the target distribution $\tilde{q}_L^\star$. To this end, $h_\phi$ is trained to minimize the KL divergence $D_{\mathrm{KL}}(q^{\phi}_L \| \tilde{q}^\star_L)$. The following proposition shows that this objective can be made tractable. The proof is provided in Appendix \ref{proof_proposition2}. 

\begin{proposition}[Objective] \label{proposition2}
Assume that $h_\phi$ is continuously differentiable with $\|J_{h_\phi}(w_L)\|_2 \le M < 1$
for any $w_L$, where $J_{h_\phi}(w_L) := \frac{\partial h_\phi(w_L)}{\partial w_L}$ denotes the Jacobian matrix, $\|\cdot\|_2$ is the spectral norm, and $M$ is a constant.
Then minimizing the KL divergence $D_{\mathrm{KL}}(q^{\phi}_L \| \tilde{q}^\star_L)$ is equivalent (up to a constant) to minimizing 
\begin{equation}
    D_{\mathrm{KL}}(q_L^\phi \| q_L)
    - \mathbb{E}_{w_L \sim q_L^\phi}[\bar{r}(w_L)],
\end{equation} 
which can be approximated (up to $\epsilon$) by 
\begin{equation} \label{final_train_loss}
       \mathcal{L}(\phi):=\mathbb{E}_{w\sim q}\left[\frac{1}{2}\|h_\phi(w_L)\|^2-r \big(g(w_L+h_\phi(w_L),w_H)\big)\right].
\end{equation}

\end{proposition}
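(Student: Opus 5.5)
Write $T_\phi(w_L) := w_L + h_\phi(w_L)$. The first step is to expand the KL divergence directly from the definition of $\tilde q^\star_L$. Let $Z_L := \int q_L(w_L)\exp(\bar r(w_L))\,dw_L$, which does not depend on $\phi$. Then
\begin{equation*}
D_{\mathrm{KL}}(q_L^\phi \| \tilde q^\star_L) = \mathbb{E}_{q_L^\phi}\big[\log q_L^\phi - \log q_L - \bar r\big] + \log Z_L ,
\end{equation*}
which equals $D_{\mathrm{KL}}(q_L^\phi\|q_L) - \mathbb{E}_{w_L\sim q_L^\phi}[\bar r(w_L)]$ up to the constant $\log Z_L$. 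This gives the first equivalence.

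The second step makes both terms tractable by pushing them back to the base variable $w_L \sim q_L$.

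\textbf{Existence of the density.} The assumption $\|J_{h_\phi}\|_2 \le M<1$ is used here. Since $h_\phi$ is a contraction, Banach's fixed point theorem applied to $u \mapsto y - h_\phi(u)$ shows that $T_\phi$ is a bijection of $\mathbb{R}^k$. The matrix $I + J_{h_\phi}$ is invertible, since its singular values lie in $[1-M, 1+M]$. So $T_\phi$ is a $C^1$ diffeomorphism, and the change-of-variables formula applies:
\begin{equation*}
\log q_L^\phi(T_\phi(w_L)) = \log q_L(w_L) - \log\big|\det(I + J_{h_\phi}(w_L))\big| .
\end{equation*}

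\textbf{The KL term.} With $q_L = \mathcal{N}(0,I)$,
\begin{equation*}
\log q_L(w_L) - \log q_L(T_\phi(w_L)) = \tfrac12\|w_L + h_\phi(w_L)\|^2 - \tfrac12\|w_L\|^2 = \langle w_L, h_\phi(w_L)\rangle + \tfrac12\|h_\phi(w_L)\|^2 .
\end{equation*}
Hence
\begin{equation*}
D_{\mathrm{KL}}(q_L^\phi\|q_L) = \mathbb{E}_{w_L\sim q_L}\Big[\tfrac12\|h_\phi\|^2 + \langle w_L,h_\phi\rangle - \log|\det(I+J_{h_\phi})|\Big].
\end{equation*}

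\textbf{Removing the cross and log-det terms.} By Gaussian integration by parts (Stein's lemma), $\mathbb{E}[\langle w_L,h_\phi(w_L)\rangle] = \mathbb{E}[\operatorname{tr} J_{h_\phi}(w_L)]$. The remaining terms combine to $\mathbb{E}[\operatorname{tr}J - \log\det(I+J)]$. Following the argument used for HyperNoise~\cite{eyring2025noise}, I would expand $\log\det(I+J) = \operatorname{tr}J - \tfrac12\operatorname{tr}J^2 + \cdots$. This series converges because $\|J\|_2 \le M <1$. The first-order terms cancel, leaving a remainder of size $O(kM^2/(1-M))$. This remainder is second order in the Jacobian and is dropped, which leaves $\mathbb{E}_{q_L}[\tfrac12\|h_\phi\|^2]$.

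\textbf{The reward term.} By the pushforward, $\mathbb{E}_{q_L^\phi}[\bar r] = \mathbb{E}_{w_L\sim q_L}[\bar r(T_\phi(w_L))]$. Writing out $\bar r$ and using that $w_L$ and $w_H$ are independent under $q = q_L q_H$ (Appendix~\ref{proof_gaussian2}), this equals $\mathbb{E}_{w\sim q}[\bar r(w_L+h_\phi(w_L))]$. Assumption~\ref{assumption} holds pointwise at every $(T_\phi(w_L), w_H)$, so replacing $\bar r(T_\phi(w_L))$ by $r(g(T_\phi(w_L), w_H))$ changes the expectation by at most $\epsilon$. Alternatively, the definition of $\bar r$ gives the identity exactly. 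Either way we arrive at \eqref{final_train_loss}.

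The main obstacle is the log-determinant/cross-term step. An exact cancellation does not hold in general. The honest claim is "equivalent up to a higher-order term controlled by $M$", so I would state precisely which term is neglected. This matches the level of rigor of the HyperNoise loss quoted in the Related Work section.
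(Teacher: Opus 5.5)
Your proposal is correct and follows the paper's proof essentially step for step: the $\log Z$ expansion, bijectivity of $T_\phi$ via Banach's fixed-point theorem, change of variables, Stein's lemma, and truncation of the $\log\det(I+J_{h_\phi})$ series. The one slip is that $w_L \in \mathbb{R}^{N\times k}$, so $T_\phi$ acts on $\mathbb{R}^{Nk}$ and the dropped remainder is of order $Nk\,M^2/(1-M)$, matching the paper's $Nk(-\log(1-M)-M)$, rather than $kM^2/(1-M)$. Your observation that the reward step is in fact exact, obtained by integrating out $w_H \sim q_H$ (independent of $w_L$ and untouched by $h_\phi$) to turn $\mathbb{E}_{q_L^\phi}[\bar r]$ into the reward term of $\mathcal{L}(\phi)$, is correct and sharper than the paper, which only bounds this step by $\epsilon$ via Assumption~\ref{assumption}.
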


\subsection{Implementation}
We present the practical implementation of the proposed framework. The framework comprises two components: a Patch PCA Codec and a LENS Network. The Patch PCA Codec projects noise onto a PCA basis to obtain coefficients and reconstructs them back to the noise space, while the LENS Network models prompt-conditioned modulation in the low-frequency coefficient space.

To construct the PCA basis, we precompute it from natural image datasets \cite{wang2025seeds}, such as ImageNetV2 \cite{recht2019imagenet}. Images are encoded into latent representations using a pretrained Variational Autoencoder (VAE)~\cite{kingma2013auto} of the diffusion model, and the resulting latents are partitioned into non-overlapping $s \times s$ patches. Each patch is reshaped into a vector in $\mathbb{R}^d$, and PCA is performed by computing the covariance matrix over all patches, yielding an orthonormal basis $V \in \mathbb{R}^{d \times d}$ whose columns correspond to eigenvectors.

The overall architecture and training pipeline are illustrated in Figure~\ref{figure_LENS}. We next describe the Patch PCA Codec and the LENS Network in detail.

\begin{figure} 
    \centerline{\includegraphics[width=\columnwidth]{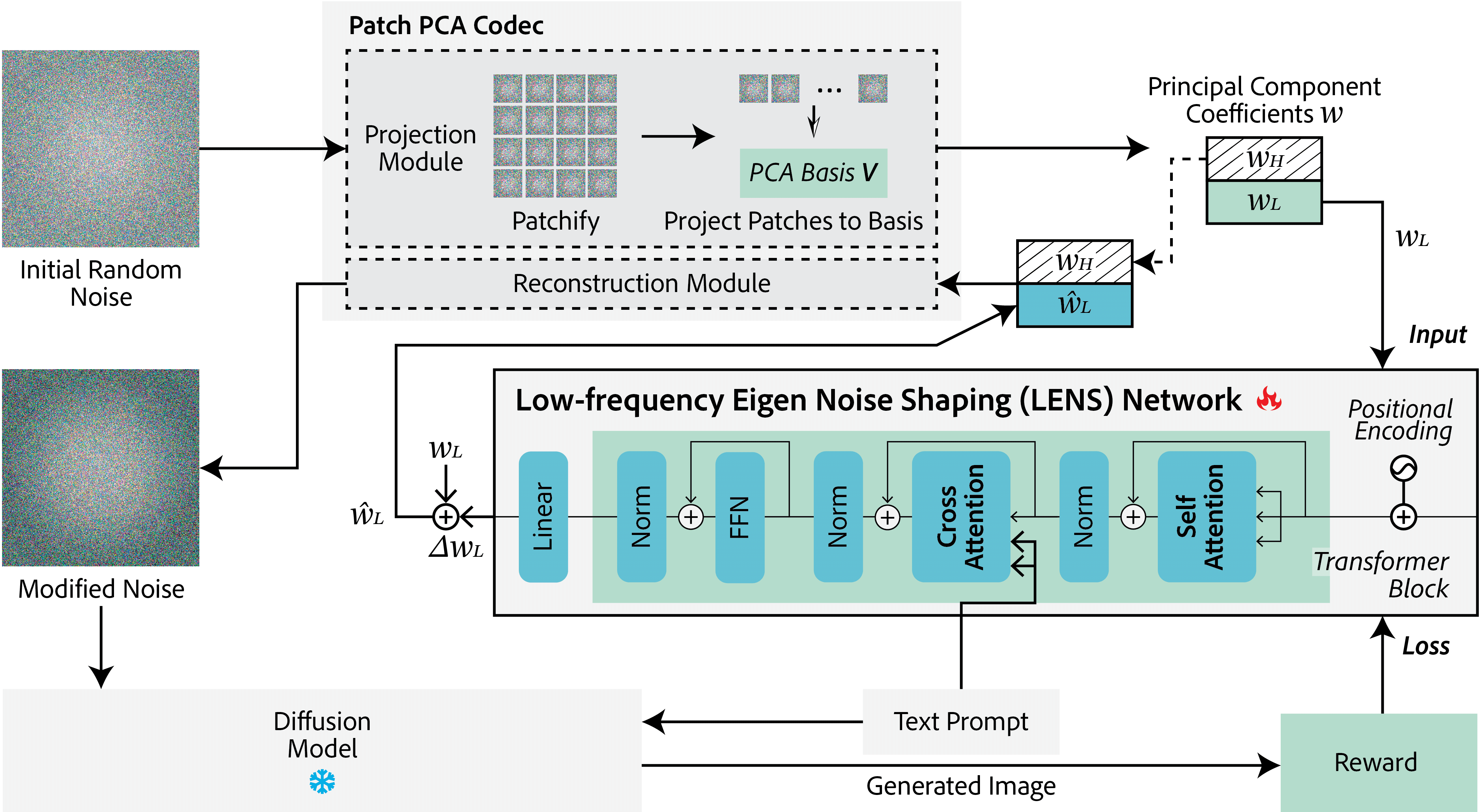}}
    \caption{Overview of the proposed \textbf{Low-frequency Eigen Noise Shaping (LENS)} framework. The model operates on low-frequency noise coefficients that capture structurally meaningful components. These coefficients are processed via self-attention, while the text prompt is incorporated through cross-attention for conditioning, and the network predicts coefficient updates.}
    \label{figure_LENS}
\end{figure}

\paragraph{Patch PCA Codec.}
This component consists of two modules: a projection module \textbf{Proj} and a reconstruction module \textbf{Recon}. The projection module partitions the input latent noise $z$ into non-overlapping patches and projects them onto the  precomputed PCA basis to obtain coefficient vectors $w$, i.e., $w=\textbf{Proj}(z)$. After coefficient modulation, the reconstruction module maps the coefficients $w$ back to the latent noise $z$ via the inverse PCA transformation, i.e., $z=\textbf{Recon}(w)$.

\paragraph{Low-frequency Eigen Noise Shaping (LENS) Network.}

The LENS network $h_\phi$ takes low-frequency coefficients $w_L$ and a text prompt $c$ as inputs, and predicts an additive perturbation to $w_L$ that aligns the generated output with the semantics of the prompt. Operating on low-frequency coefficients reduces the dimensionality of the prediction space, enabling a lightweight network decoupled from the diffusion backbone. We implement this network using a transformer architecture~\cite{vaswani2017attention} to model global dependencies and incorporate text conditioning. The network consists of an input projection layer, sinusoidal positional embeddings, multiple transformer blocks (including self-attention, text cross-attention, and feed-forward networks), and a final linear layer. Self-attention captures global correlations among coefficient tokens across patches, while text cross-attention aligns the modulation with the semantic content of the prompt. Further architectural details are provided in Appendix~\ref{architecture_explain}.

%\newpage
\begin{wrapfigure}{r}{0.5\textwidth}
\begin{minipage}{\linewidth}
%\vspace{-24pt}
\vspace{-12pt}
\begin{algorithm}[H]
\caption{Training of LENS Network}
\label{alg:training}
\begin{algorithmic}[1]
\State \textbf{Initialize:} LENS network parameters $\phi$

\While{learning}
    \State Sample latent noise $z_0 \sim \mathcal{N}(0, I)$
    \State $(w_L, w_H)=\textbf{Proj}(z_0)$
    \State $\hat{w} \gets (w_L + h_\phi(w_L,c),\, w_H)$
    %\State $\hat{z}_0=\textbf{Recon}(\hat{w})$
    \State $x \gets G_\theta(\textbf{Recon}(\hat{w}))$
    \State Compute reward $r(x,c)$ and loss $\mathcal{L}(\phi)$ 
    \State Update $\phi$ using gradient descent
\EndWhile
\State \textbf{return} trained parameters $\phi$

\end{algorithmic}
\end{algorithm}

\end{minipage}
\end{wrapfigure}
The network is pretrained and used once at inference time to modulate the low-frequency noise. The training procedure is summarized in Algorithm~\ref{alg:training}. We use Eq.~\eqref{final_train_loss} as the training loss $\mathcal{L}(\phi)$, which consists of a regularization term and a reward term. The former acts as an $L_2$ penalty on the coefficient residual, encouraging proximity to the original Gaussian prior. The latter is a weighted combination of multiple reward models, including CLIP~\cite{hessel2021clipscore}, HPSv2.1~\cite{wu2023human}, ImageReward~\cite{xu2023imagereward}, and PickScore~\cite{kirstain2023pick}. Additional details on the reward formulation are provided in Appendix~\ref{reward_explain}.

\subsection{Complexity Analysis}

We analyze the computational complexity of our transformer-based noise modulation framework, LENS. The computational complexity of a standard transformer is $\mathcal{O}(n^2 r + n r^2)$, where $n$ is the number of tokens and $r$ is the representation dimension~\cite{vaswani2017attention}. The first term corresponds to self-attention, while the second term accounts for the feed-forward network. 
In LENS, the number of tokens is $N$ (the number of patches), and each token has dimension $k$ (the number of low-frequency PCA coefficients). As a result, the computational complexity becomes $\mathcal{O}(N^2 k + N k^2)$. We omit the cost of cross-attention, which scales with the prompt length, as it is negligible compared to the self-attention term given that the prompt is significantly shorter than the number of tokens.

Compared to operating in the full latent dimension $d$, this reduces the self-attention and feed-forward complexities by factors of $d/k$ and $(d/k)^2$, respectively, yielding substantial computational savings. The efficiency of our method is experimentally validated in Section~\ref{sec:efficiency_experiment}.

\section{Experiment}
\label{experiment}

\subsection{Experimental Settings}

\paragraph{Implementation.}
We conduct experiments using three distilled text-to-image diffusion models: SD-Turbo~\cite{sauer2024adversarial}, Hyper-SDXL \cite{ren2024hyper}, and SANA-Sprint~\cite{chen2025sana}. For each model, we precompute a corresponding PCA basis using its pretrained VAE encoder. We set the patch size to $s=4$. For SD-Turbo and Hyper-SDXL, the latent channel dimension is $C=4$, resulting in $d=64$, and we use $k=32$ low-frequency coefficients for modulation. For SANA-Sprint, the latent channel dimension is $C=32$, resulting in $d=512$, and we set $k=256$. The effects of $k$ and $s$ on performance are analyzed in Appendices~\ref{additional_exp_k} and \ref{additional_exp_s}, respectively. All experiments are conducted on NVIDIA RTX 3090 GPUs. Further implementation details are provided in Appendix~\ref{appendix_implementation_detail}.

\paragraph{Dataset.}
For PCA basis extraction, we use ImageNetV2~\cite{recht2019imagenet}. During training, we use approximately 70k text prompts collected from multiple sources, including Attribute Binding (ABC-6K)~\cite{feng2022training}, the T2I-CompBench training set~\cite{huang2023t2i}, and Pick-a-Pic v2~\cite{kirstain2023pick}.

\paragraph{Baselines.}
We compare our method against three baselines: random noise initialization (unmodified noise input), ReNo~\cite{eyring2024reno}, and HyperNoise~\cite{eyring2025noise}. Random noise initialization samples the initial latent from a standard Gaussian distribution without additional processing. ReNo performs reward-guided test-time optimization, iteratively updating the initial noise via gradient-based optimization to maximize a given reward function. HyperNoise trains a noise hypernetwork by attaching LoRA adapters to the diffusion backbone, using it to predict a modulated initial noise that is then fed into the generator.

\subsection{Experimental Results} \label{experimental_results}
\paragraph{Generation Quality.}
We evaluate the quality of generated images on the GenEval2 benchmark~\cite{kamath2025geneval}, with results reported in Table~\ref{tab:geneval_results}. All images are generated using a single diffusion step. Our method, LENS, achieves comparable or better performance than HyperNoise while reducing the additional time cost by a factor of 10--20. More specifically, HyperNoise incurs additional computational cost of approximately 50\%--80\% of the image generation cost, as the diffusion backbone must be executed once more during noise modulation. In contrast, our method introduces less than 5\% overhead. Although ReNo attains the best performance in many cases, it requires  40--100$\times$ longer generation time than our method, making it impractical for real-world applications. We analyze performance as the number of inference steps increases in Appendix~\ref{additional_exp_multi}. Additional results on other benchmarks are provided in Appendix~\ref{additional_exp_other}.

\begin{table}[t]
\centering
\resizebox{\textwidth}{!}{

\renewcommand{\arraystretch}{1.1}
\begin{tabular}{l c | ccccc |c}
\toprule
\textbf{Model} & \textbf{Time(s)} $\downarrow$ & \textbf{Object} $\uparrow$ & \textbf{Attribute} $\uparrow$ & \textbf{Count} $\uparrow$ & \textbf{Position} $\uparrow$ & \textbf{Verb} $\uparrow$ & \textbf{Mean} $\uparrow$ \\
\midrule
SD-Turbo & 0.107 & 72.27 & 42.65 & 21.15 & 28.1 & 5.60& 33.95 \\
\textcolor{lgray}{+ ReNo} & \textcolor{lgray}{11.371} & \textcolor{lgray}{81.02}& \textcolor{lgray}{50.42} & \textcolor{lgray}{31.91} & \textcolor{lgray}{32.58} & \textcolor{lgray}{6.90} & \textcolor{lgray}{40.57} \\
+ HyperNoise  & 0.194 & 74.94  & 43.00 & 23.95 & 27.67 & 5.23 & 34.96 \\
\textbf{+ LENS (Ours)} & \textbf{0.112} & \textbf{75.07} & \textbf{43.08} & \textbf{26.08} & \textbf{28.46} & \textbf{5.78} & \textbf{35.69} \\

\midrule
Hyper-SDXL & 0.414 & 71.94 & 51.97 & 26.88 & 21.14 & 3.28 & 35.04 \\
\textcolor{lgray}{+ ReNo} & \textcolor{lgray}{45.13} & \textcolor{lgray}{80.06}  & \textcolor{lgray}{56.79} & \textcolor{lgray}{32.61} & \textcolor{lgray}{27.53} & \textcolor{lgray}{3.59} & \textcolor{lgray}{40.12}  \\
+ HyperNoise & 0.694 & 71.97 & 52.02 & 27.06 & \textbf{21.58} & \textbf{4.76} & 35.48 \\
\textbf{+ LENS (Ours)} & \textbf{0.424} & \textbf{72.96} & \textbf{53.47} & \textbf{27.84} & 21.45 & 4.39 & \textbf{36.02} \\

\midrule
SANA-Sprint & 0.336 & 80.23 & 54.73 & 33.01 & 33.31 & 15.53 & 43.36 \\
\textcolor{lgray}{+ ReNo} & \textcolor{lgray}{14.964} & \textcolor{lgray}{83.16}& \textcolor{lgray}{56.57} & \textcolor{lgray}{34.73} & \textcolor{lgray}{38.36} & \textcolor{lgray}{13.44} & \textcolor{lgray}{45.25} \\
+ HyperNoise & 0.532 & \textbf{83.59} & 54.18 & 34.84 & 36.32 & \textbf{20.66} & \textbf{45.92}\\
\textbf{+ LENS (Ours)} & \textbf{0.352} & 82.98 & \textbf{55.27} & \textbf{36.12} & \textbf{36.38} & 18.21 & 45.79 \\

\bottomrule
\end{tabular}
}
\vspace{3pt}
    \caption{Results on GenEval2. Higher is better ($\uparrow$) for all metrics except Time. LENS achieves performance comparable to or better than HyperNoise while enabling faster generation. ReNo is shown in gray, as it relies on iterative test-time optimization, incurring substantially higher computational cost and making it not directly comparable to feed-forward methods. Standard deviations are reported in Appendix~\ref{appendix_std}.}
\label{tab:geneval_results}
\vspace{-10pt}
\end{table}

\paragraph{Computational Efficiency.} \label{sec:efficiency_experiment}
In Table~\ref{tab:efficiency_comparison}, we evaluate the computational efficiency of our method and HyperNoise~\cite{eyring2025noise}, a representative approach that learns a noise modulation network. We report FLOPs for the noise-modulation pathway only, excluding full image generation. For HyperNoise, this includes a forward pass through the diffusion backbone (including LoRA branches), whereas for LENS, it includes PCA projection/reconstruction and a forward pass of the LENS network. We also report the number of trainable parameters in the modulation network, counting LoRA parameters for HyperNoise and all parameters of the LENS network for our method. Inference time is measured as the time required for noise modulation, averaged over 100 samples. Our method achieves substantial efficiency gains, reducing FLOPs by 400–700$\times$, the number of parameters by 25–75$\times$, and inference-time latency by 10–20$\times$ across all evaluated models.

\begin{table}[t] 
\centering

\resizebox{\columnwidth}{!}{
{\fontsize{4.5}{5}\selectfont
\renewcommand{\arraystretch}{0.55}

\setlength{\heavyrulewidth}{0.09em}
\setlength{\lightrulewidth}{0.06em}

\begin{tabular}{llccc}
\toprule 
\textbf{Model} & \textbf{Method} & \textbf{FLOPs} & \textbf{Parameters}  & \textbf{Inference-time} \\
\midrule
\multirow{2}{*}{SD-Turbo}    & HyperNoise & 997.1 G & 136.5 M & 70.78 ms\\
\addlinespace[1pt]
& \textbf{LENS (Ours)}  & 2.4 G & 4.8 M & 5.68ms  \\

\midrule
\multirow{2}{*}{Hyper-SDXL}  & HyperNoise  & 7.8 T & 395.3 M &  207.91 ms \\
\addlinespace[1pt]
& \textbf{LENS (Ours)}  & 12.3 G & 5.3 M & 9.83 ms  \\

\midrule
\multirow{2}{*}{SANA-Sprint} & HyperNoise & 1.7 T & 314.7 M & 173.62 ms \\
\addlinespace[1pt]
& \textbf{LENS (Ours)}  & 2.2 G & 10.0 M & 10.84 ms  \\

\bottomrule
\end{tabular}
}
}
\vspace{3pt}
\caption{Noise modulation efficiency comparison between LENS and HyperNoise across different diffusion models. LENS consistently achieves lower FLOPs, fewer parameters, and faster inference time, resulting in significantly improved computational efficiency.}
\label{tab:efficiency_comparison}
\vspace{-10pt}
\end{table}

\paragraph{Low-Frequency Concentration of Reward Gradients.}

\begin{figure}[t]
    \vspace{-10pt}
    \centering
    \begin{subfigure}{0.49\textwidth}
        \centering
        \includegraphics[width=\linewidth]{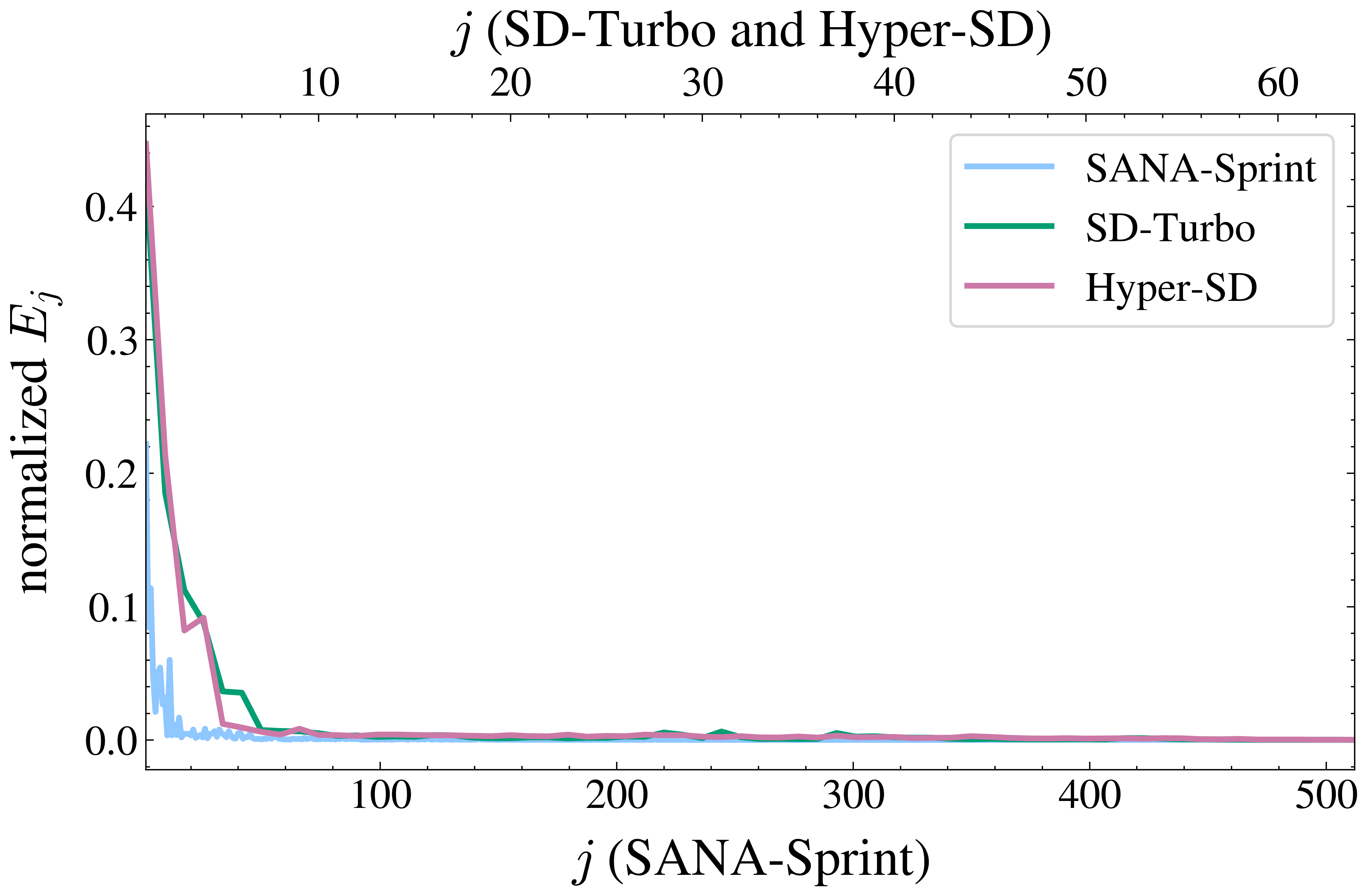}
        \caption{Reward gradient energy spectrum in the PCA basis}
    \end{subfigure}
    \hfill
    \begin{subfigure}{0.49\textwidth}
        \centering
        \includegraphics[width=\linewidth]{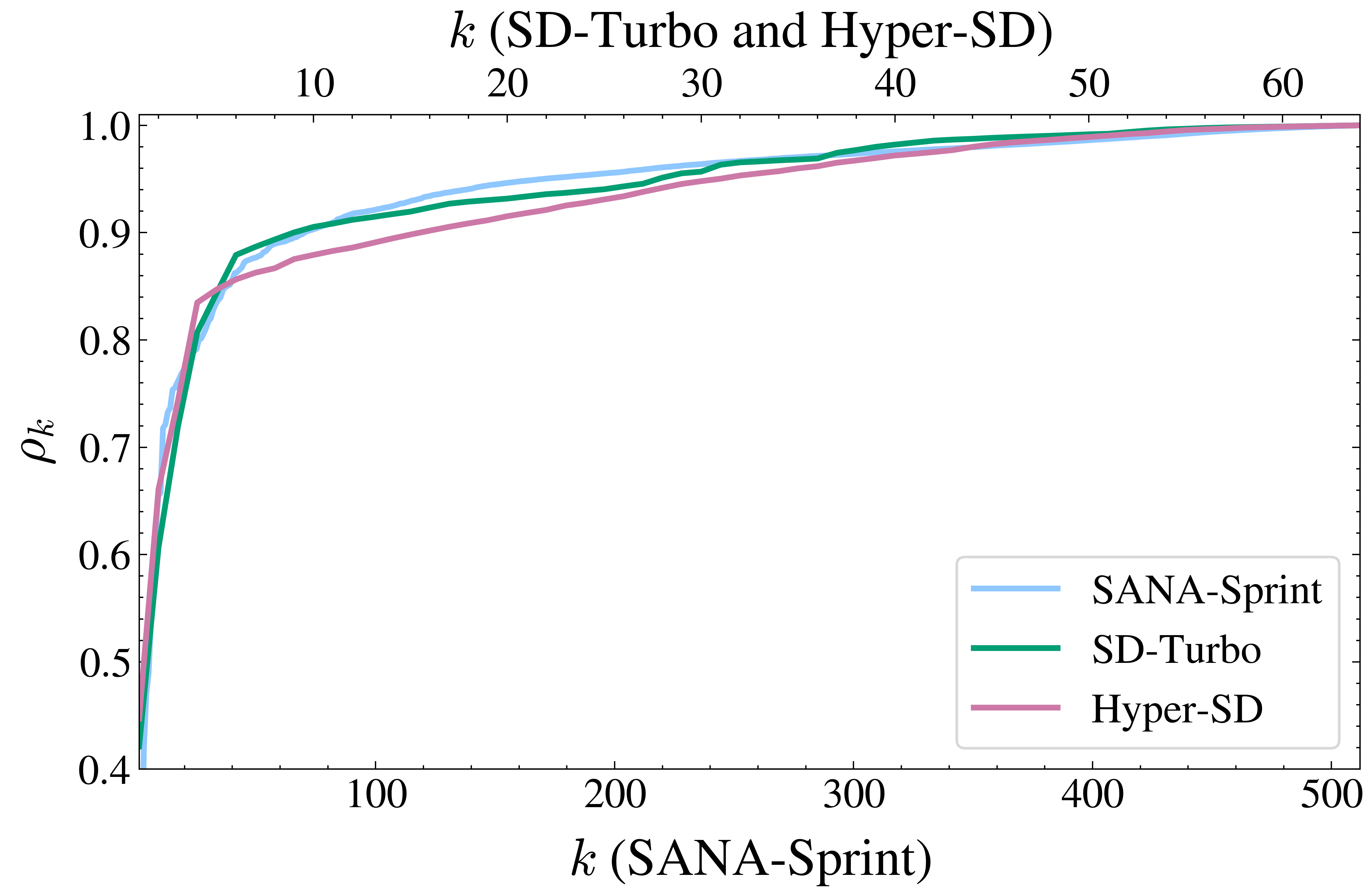}
        \caption{Cumulative gradient energy ratio}
    \end{subfigure}
    \vspace{-2pt}
    \caption{Reward gradient energy distribution in the PCA basis. 
(a) Normalized energy spectrum $E_j / \sum_j E_j$, showing a rapid decay toward higher-frequency components.
(b) Cumulative energy ratio $\rho_k$, indicating that most energy is concentrated in low-frequency components.}
    \label{fig:reward_gradient}
    \vspace{-5pt}
\end{figure}

In this experiment, we validate the key hypothesis that reward-increasing directions are concentrated in a low-frequency PCA subspace rather than the full high-dimensional noise space. To this end, given an initial noise $z_0$ and a prompt $c$, we compute the reward gradient $\nabla_{z_0} r\bigl(G_\theta(z_0, c)\bigr)$, which indicates the direction in which the noise should be perturbed to maximally increase the reward. Then, we reshape the reward gradient into patch-wise vectors $\{u_\ell\}$ and project each onto the PCA basis $V = [v_1, \dots, v_d] \in \mathbb{R}^{d \times d}$, where $v_j$ denotes the $j$-th eigenvector, yielding coefficients $\alpha_{j,\ell} = v_j^\top u_\ell$. The importance of each direction is quantified by the expected gradient energy $E_j = \mathbb{E}_{\ell}[\alpha_{j,\ell}^2]$, where the expectation is taken over spatial patches and 100 sampled images. The cumulative energy ratio captured by the top-$k$ components is then defined as $\rho_k = \frac{\sum_{j=1}^{k} E_j}{\sum_{j=1}^{d} E_j}$. 

Figure~\ref{fig:reward_gradient}(a) shows that the normalized $E_j$ values decay rapidly as $j$ increases, indicating that high-frequency components contribute little to the overall gradient energy. Figure~\ref{fig:reward_gradient}(b) further shows that $\rho_k$ quickly approaches a high value even for small $k$. These results confirm that the reward gradient is largely concentrated in a low-dimensional PCA subspace.

\paragraph{Generation Diversity}
To evaluate generation diversity, we generate images for 300 prompts from the GenEval2 benchmark using 50 different random noise seeds, and compute LPIPS~\cite{zhang2018unreasonable} and DINOv2~\cite{oquab2023dinov2} scores. LPIPS measures perceptual differences between images, where higher values indicate greater visual diversity. DINO, on the other hand, captures feature-level similarity, where lower scores correspond to lower semantic similarity and thus higher semantic diversity.

Table~\ref{tab:diversity} reports LPIPS and DINOv2 scores for random noise, HyperNoise, and LENS. The results show that LENS achieves lower LPIPS than random noise, indicating a moderate reduction in pixel-level diversity, while still maintaining higher diversity than HyperNoise. Under the DINO metric, LENS attains lower scores than random noise, suggesting improved semantic diversity. These results demonstrate that LENS preserves a favorable level of diversity while enhancing semantic variation.

\begin{table}[t]
\centering
\renewcommand{\arraystretch}{1.4} 
\setlength{\tabcolsep}{18pt}   
\fontsize{10}{10}\selectfont
\begin{tabular}{lcc}
\toprule 
\textbf{Method}  & \textbf{LPIPS $\uparrow$} & \textbf{DINO $\downarrow$} \\
\midrule
SD-Turbo  & \textbf{0.6249 $\pm$ 0.0427} & 0.7531 $\pm$ 0.1107  \\
+ HyperNoise & 0.5406 $\pm$0.0562 & 0.8120 $\pm$ 0.1047  \\
\textbf{+ LENS (Ours)}  & 0.5887 $\pm$ 0.0404 & \textbf{0.7337 $\pm$ 0.1221}  \\
\bottomrule
\end{tabular}
\vspace{4pt}
\caption{Generation diversity measured by LPIPS and DINO. LENS achieves intermediate LPIPS between random noise and HyperNoise, while attaining the lowest DINO score among all methods. Values are reported as mean $\pm$ standard deviation.}
\label{tab:diversity}
\vspace{-15pt}
\end{table}

\begin{figure} [t]
    \vspace{-16pt}
    \centerline{\includegraphics[width=0.87\columnwidth]{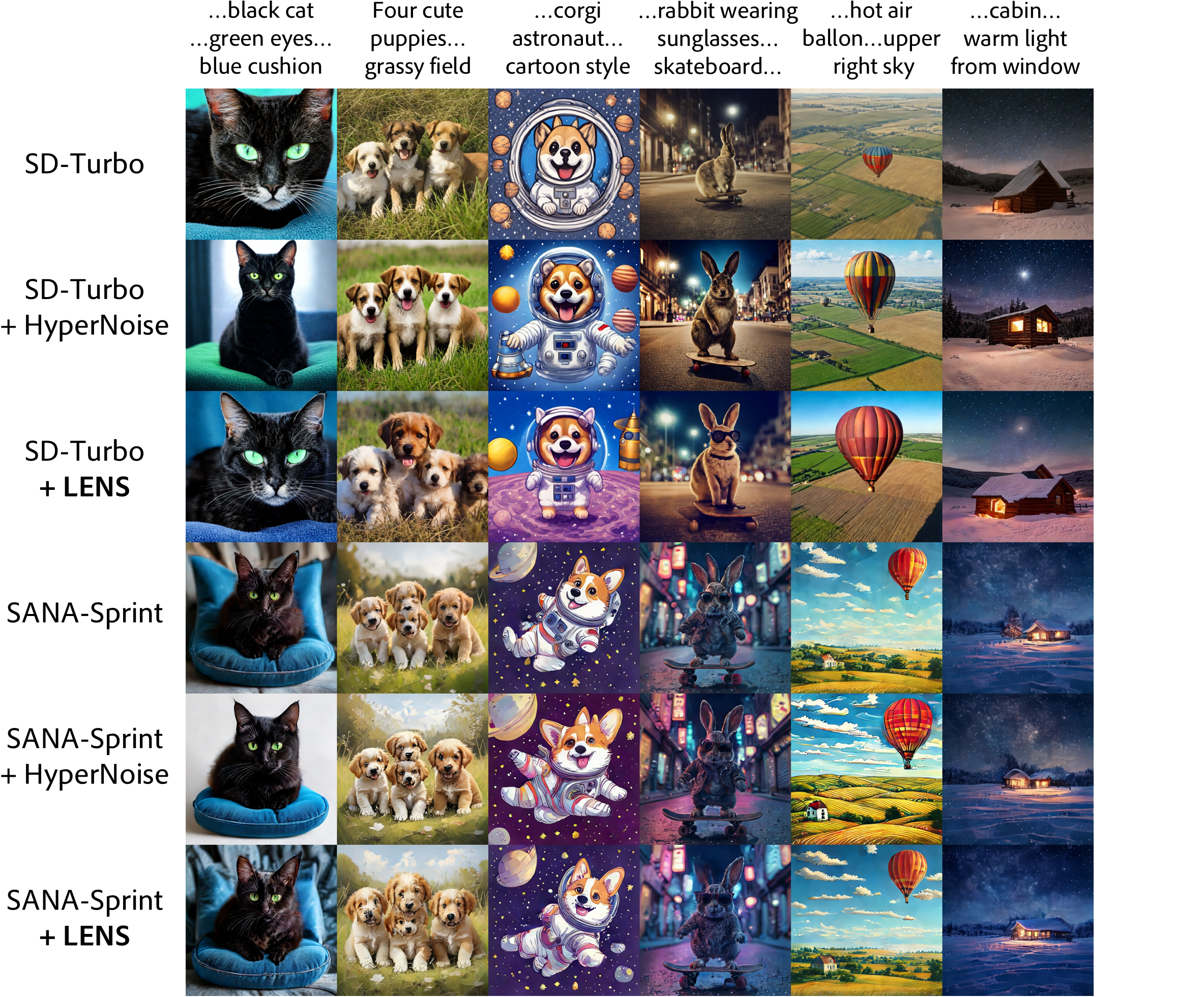}}
    \vspace{-4pt}
    \caption{One-step generation results using SD-Turbo and SANA-Sprint. Starting from the same initial noise, we compare outputs generated with unmodified noise, HyperNoise, and LENS.}
    \label{fig_images}
    \vspace{-12pt}
\end{figure}

\paragraph{Qualitative Results.}
Figure~\ref{fig_images} illustrates one-step generation results from SD-Turbo and SANA-Sprint. All samples for each prompt are generated from the same initial noise to ensure a fair comparison. Without noise modification, the generated images often exhibit poor alignment with the given prompts. In contrast, both HyperNoise and LENS improve semantic consistency and overall visual quality. However, HyperNoise occasionally exhibits inconsistencies with key prompt attributes, whereas LENS generates images that remain more faithful to the intended semantics. LENS further enhances structural clarity and overall generation quality, demonstrating more reliable high-quality generation in the one-step setting.

\paragraph{More Experiments.}

We provide additional experiments in Appendix~\ref{additional_exp}. Specifically, Appendices~\ref{additional_exp_k} and \ref{additional_exp_s} present ablation studies on the number of low-frequency coefficients $k$ and the patch size $s$, justifying our design choices.  Appendix~\ref{additional_exp_layer} examines the effect of the number of Transformer layers in the LENS network, showing that increasing model depth does not necessarily improve performance. Appendix~\ref{additional_exp_multi} analyzes performance as the number of inference steps increases, complementing the one-step focus of the main paper, and shows that the distilled models are optimized for few-step generation. Appendix~\ref{additional_exp_other} reports results on additional benchmarks, confirming the robustness of our method across different settings. Finally, standard deviations for the results in Table \ref{tab:geneval_results} are provided in Appendix~\ref{appendix_std}.

\section{Conclusion}
In this paper, we proposed LENS, a principled and efficient framework for noise modulation in diffusion models. By leveraging the insight that low-frequency components of the noise largely determine the global structure of generated images, we restrict modulation to a low-dimensional subspace and derive a corresponding training objective. This design enables effective improvement in generation quality while significantly reducing computational overhead. 
Extensive experiments across multiple distilled diffusion backbones demonstrate that LENS consistently achieves competitive or improved image quality with minimal inference-time overhead. These results establish low-dimensional noise shaping as an effective approach for improving diffusion-based generation, offering a practical pathway toward more efficient and scalable generative models. A discussion of the broader impact is provided in Appendix~\ref{broader}.

%In this work, we propose the Low-frequency Eigen Noise Shaping (LENS) framework to mitigate the trade-off between speed and quality in distilled diffusion models. LENS employs a separate lightweight transformer network to modulate noise by selectively adjusting low-frequency coefficients, leveraging their impact on global image structure. Experimental results show that LENS achieves competitive image quality with minimal computational overhead, making it practical for real-world applications.

\paragraph{Limitations and Future Work.}
Our approach relies on a fixed PCA basis computed from a reference dataset, which may not fully capture data distributions that differ significantly from the training domain. Adapting the basis to new domains or learning it jointly with the modulation network could further improve generalization. In addition, LENS focuses on low-frequency components of the noise, which are primarily responsible for global structure. Although this design enables efficient modulation, it may limit the ability to refine fine-grained details governed by high-frequency components. Extending the framework to selectively incorporate higher-frequency information in a controlled manner is an interesting direction for future work.

\newpage

%\begin{ack}
%Use unnumbered first level headings for the acknowledgments. All acknowledgments go at the end of the paper before the list of references. Moreover, you are required to declare funding (financial activities supporting the submitted work) and competing interests (related financial activities outside the submitted work).
%\end{ack}

\bibliographystyle{plainnat}
\bibliography{references}

%%%%%%%%%%%%%%%%%%%%%%%%%%%%%%%%%%%%%%%%%%%%%%%%%%%%%%%%%%%%
\newpage
\appendix

%\section*{Appendix}

\section{Theoretical Derivations}
\subsection{Preliminaries}

\paragraph{Gaussian Distribution.}
We denote $\mathcal{N}(\mu, \Sigma)$ as a multivariate Gaussian distribution with mean $\mu$ and covariance matrix $\Sigma$. In particular, let $\phi_d(x)$ be the probability density function (PDF) of the $d$-dimensional standard Gaussian distribution $\mathcal{N}(0, I_d)$, defined as:
\begin{equation}
    \phi_d(x) = \frac{1}{(2\pi)^{d/2}} \exp\left( -\frac{1}{2} \|x\|^2 \right).
\end{equation}

A fundamental property of Gaussian distributions is their closure under affine transformations. Specifically, if a random vector $x$ follows a Gaussian distribution $x \sim \mathcal{N}(\mu, \Sigma)$, then for any matrix $A \in \mathbb{R}^{m \times d}$ and vector $b \in \mathbb{R}^m$, the transformed variable $y = Ax + b$ is also Gaussian:
\begin{equation}
    y \sim \mathcal{N}(A\mu + b, A\Sigma A^\top).
\end{equation}

Furthermore, the standard Gaussian distribution $\mathcal{N}(0, I_d)$ exhibits \textit{rotational invariance}. If $A$ is an orthogonal matrix (i.e., $A^\top A = AA^\top = I_d$), the covariance structure is preserved under the transformation $y = Ax$, yielding:
\begin{equation}
    y \sim \mathcal{N}(0, I_d).
\end{equation}
This property ensures that the statistical characteristics of the Gaussian distribution remain unchanged under orthogonal transformations, such as rotations or changes of basis.

\paragraph{Exponential Tilting.}
Given a base distribution $q_0(x)$ and a real-valued function $f(x)$, the exponentially tilted distribution is defined as
\begin{equation}
    q(x) = \frac{1}{Z} q_0(x)\exp(f(x)),
\end{equation}
where $Z = \int q_0(x)\exp(f(x))\,dx$ is the normalization constant, assumed to be finite.

This construction reweights the base distribution $q_0$ by assigning higher probability mass to regions where $f(x)$ is large. As a result, $q(x)$ is biased toward such regions while preserving the overall structure of the base distribution.

In the context of reward-based modeling, $f(x)$ typically corresponds to a reward function $r(x)$, and the tilted distribution takes the form
\begin{equation}
    q(x) \propto q_0(x)\exp(r(x)).
\end{equation}
This formulation commonly arises in probabilistic inference and energy-based models, where the exponential term softly biases the distribution toward high-reward regions while maintaining stochasticity.

Importantly, exponential tilting only reweights the base distribution $q_0$ and does not assign probability mass to regions where $q_0(x) = 0$. In other words, if a point $x$ is impossible under $q_0$, it remains impossible under $q$. As a result, the support of $q$ is contained within that of $q_0$. This means that exponential tilting does not introduce new regions in the space, but instead redistributes probability mass within the existing domain. Such a property makes it particularly suitable for guiding generation, as it biases the distribution toward desirable regions while preserving the overall structure of the original distribution.

\paragraph{Kullback--Leibler Divergence.}
For two probability densities $p$ and $q$, the Kullback--Leibler (KL) divergence is defined as
\begin{equation}
    D_{\mathrm{KL}}(p \,\|\, q)
    =
    \mathbb{E}_{p}\left[\log \frac{p(x)}{q(x)}\right],
\end{equation}
assuming that $p$ is absolutely continuous with respect to $q$. It measures the discrepancy between two distributions, although it is not a symmetric distance. The KL divergence is always non-negative and equals zero if and only if $p = q$ almost everywhere.

An equivalent expression is
\begin{equation}
    D_{\mathrm{KL}}(p \,\|\, q)
    =
    \mathbb{E}_{p}[\log p(x)] - \mathbb{E}_{p}[\log q(x)],
\end{equation}
which shows that minimizing $D_{\mathrm{KL}}(p \,\|\, q)$ encourages $q$ to assign high probability to samples drawn from $p$.

\subsection{Gaussian Prior Preservation under PCA Projection} \label{proof_gaussian}
We show that the orthonormal PCA transformation preserves the Gaussian distribution.
Let $z_0 \sim \mathcal{N}(0, I_D)$ and partition it into non-overlapping patches $\{s_i\}_{i=1}^N$, where $s_i \in \mathbb{R}^d$.
Define $w_i = V^\top s_i$ with an orthonormal matrix $V \in \mathbb{R}^{d \times d}$.
Then $w_i \sim \mathcal{N}(0, I_d)$.

\begin{proof}
Let $B_i \in \mathbb{R}^{d \times D}$ denote the selection matrix that extracts the $i$-th patch from $z_0$, so that
\begin{equation}
    s_i = B_i z_0.
\end{equation}
Since $z_0 \sim \mathcal{N}(0, I_D)$ and $s_i$ is a linear transformation of $z_0$, $s_i$ is also Gaussian:
\begin{equation}
    s_i \sim \mathcal{N}(B_i 0,\; B_i I_D B_i^\top).
\end{equation}
Because $B_i$ selects $d$ distinct coordinates, we have $B_i I_D B_i^\top = B_i B_i^\top = I_d$.
Hence,
\begin{equation}
    s_i \sim \mathcal{N}(0, I_d).
\end{equation}

Applying the property of Gaussian distribution to $s_i \sim \mathcal{N}(0, I_d)$ with $A = V^\top$ and define $w_i = V^\top s_i$, we obtain
\[
w_i \sim \mathcal{N}(V^\top 0,\; V^\top I_d V).
\]
Using the orthonormality of $V$, $V^\top I_d V = V^\top V = I_d$.

Therefore,
\begin{equation}
    w_i \sim \mathcal{N}(0, I_d).
\end{equation}

\end{proof}

\subsection{Gaussian Prior Preservation for the Full Coefficient Collection}  \label{proof_gaussian2}

We show that the full stacked coefficient vector, obtained from non-overlapping patches via an orthonormal PCA transform, follows a standard Gaussian distribution.

Let
\begin{equation}
    w
    :=
    \begin{bmatrix}
        w_1 \\
        w_2 \\
        \vdots \\
        w_N
    \end{bmatrix}
    \in \mathbb{R}^{Nd},
\end{equation}
where each $w_i \in \mathbb{R}^d$ denotes the PCA coefficient vector of the $i$-th patch. For notational convenience, $w$ may also be viewed as an $N \times d$ array, but we treat it as a vector in $\mathbb{R}^{Nd}$ when defining its distribution.

We will show that
\begin{equation}
    w \sim \mathcal{N}(0, I_{Nd}),
\end{equation}
which we denote by $q(w) = \mathcal{N}(0, I)$.

\begin{proof}
From Section~\ref{proof_gaussian}, for each patch $i$ we have
\begin{equation}
    w_i = V^\top s_i \sim \mathcal{N}(0, I_d),
\end{equation}
where $V$ is an orthonormal PCA basis matrix satisfying $V^\top V = VV^\top = I_d$. To establish the joint distribution of the full collection $w$, it remains to analyze the stacked patch vector.

Define
\begin{equation}
    \mathbf{s} :=
    \begin{bmatrix}
        s_1 \\
        s_2 \\
        \vdots \\
        s_N
    \end{bmatrix}
    \in \mathbb{R}^{Nd}.
\end{equation}
Each patch is obtained from the initial noise $z_0 \sim \mathcal{N}(0, I_D)$ via a linear selection:
\begin{equation}
    s_i = B_i z_0,
\end{equation}
where $B_i \in \mathbb{R}^{d \times D}$ is a selection matrix that extracts the coordinates corresponding to the $i$-th non-overlapping patch.

Stacking all patches, define
\begin{equation}
    B :=
    \begin{bmatrix}
        B_1 \\
        B_2 \\
        \vdots \\
        B_N
    \end{bmatrix}
    \in \mathbb{R}^{Nd \times D},
\end{equation}
so that
\begin{equation}
    \mathbf{s} = B z_0.
\end{equation}
Since $z_0 \sim \mathcal{N}(0, I_D)$ and $p$ is a linear transformation of $z_0$, it follows that
\begin{equation}
    \mathbf{s} \sim \mathcal{N}(0, B I_D B^\top) = \mathcal{N}(0, BB^\top).
\end{equation}

Because the patches are non-overlapping and each $E_i$ selects $d$ distinct coordinates, we have
\begin{equation}
    B_i B_i^\top = I_d, \quad
    B_i B_j^\top = 0 \quad (i \neq j).
\end{equation}
Thus,
\begin{equation}
    BB^\top
    =
    \begin{bmatrix}
        I_d & 0   & \cdots & 0 \\
        0   & I_d & \cdots & 0 \\
        \vdots & \vdots & \ddots & \vdots \\
        0   & 0   & \cdots & I_d
    \end{bmatrix}
    = I_{Nd},
\end{equation}
which implies
\begin{equation}
    \mathbf{s} \sim \mathcal{N}(0, I_{Nd}).
\end{equation}

Next, define the block-diagonal matrix
\begin{equation}
    \bar{V}
    :=
    \begin{bmatrix}
        V^\top & 0      & \cdots & 0 \\
        0      & V^\top & \cdots & 0 \\
        \vdots & \vdots & \ddots & \vdots \\
        0      & 0      & \cdots & V^\top
    \end{bmatrix}
    \in \mathbb{R}^{Nd \times Nd}.
\end{equation}
Then the full coefficient vector can be written as
\begin{equation}
    w = \bar{V} \mathbf{s}.
\end{equation}

Since $\mathbf{s} \sim \mathcal{N}(0, I_{Nd})$ and $\bar{V}$ is orthonormal (as each block $V^\top$ is orthonormal), we have
\begin{equation}
    w
    \sim
    \mathcal{N}(0, \bar{V} I_{Nd} \bar{V}^\top)
    =
    \mathcal{N}(0, \bar{V}\bar{V}^\top)
    =
    \mathcal{N}(0, I_{Nd}).
\end{equation}

Therefore, the full stacked coefficient vector follows a standard Gaussian distribution:
\begin{equation}
    w \sim \mathcal{N}(0, I_{Nd}).
\end{equation}
This completes the proof.
\end{proof}

\subsection{Factorization of the Gaussian Prior over Low- and High-Frequency Components}

We show that the standard Gaussian prior factorizes over the low- and high-frequency components.

Let
\begin{equation}
    w
    =
    \begin{bmatrix}
        w_1 \\
        w_2 \\
        \vdots \\
        w_N
    \end{bmatrix}
    \in \mathbb{R}^{N \times d},
\end{equation}
where each $w_i \in \mathbb{R}^d$, and decompose
\begin{equation}
    w = (w_L, w_H),
\end{equation}
with $w_L \in \mathbb{R}^{N \times k}$ and $w_H \in \mathbb{R}^{N \times (d-k)}$.

Then the Gaussian prior factorizes as
\begin{equation}
    q(w) = q_L(w_L)\, q_H(w_H).
\end{equation}

\begin{proof}
From Appendix~\ref{proof_gaussian2}, the full coefficient collection satisfies
\begin{equation}
    w \sim \mathcal{N}(0, I).
\end{equation}
Under the decomposition $w = (w_L, w_H)$, the covariance matrix can be written in block form as
\begin{equation}
    \mathrm{Cov}(w)
    =
    \begin{bmatrix}
        \mathrm{Cov}(w_L) & \mathrm{Cov}(w_L,w_H) \\
        \mathrm{Cov}(w_H,w_L) & \mathrm{Cov}(w_H)
    \end{bmatrix}.
\end{equation}
Since $w$ is standard Gaussian, this covariance equals the identity, implying
\begin{equation}
    \mathrm{Cov}(w_L,w_H)=0.
\end{equation}
Because $(w_L,w_H)$ is jointly Gaussian with zero cross-covariance, the two variables are independent. Therefore,
\begin{equation}
    q(w_L,w_H)
    =
    q_L(w_L)\, q_H(w_H).
\end{equation}
\end{proof}

\subsection{(Proof of Proposition \ref{proposition1}) Low-Frequency Approximation of the Tilted Distribution} \label{proof_proposition1}
\begin{proof}
By definition of the reward-tilted distribution,
\begin{equation}
    q^\star(w)\propto q(w)\exp\big(r(g(w_L,w_H))\big).
\end{equation}
Using the decomposition
\[
r(g(w_L,w_H))=\bar r(w_L)+\delta(w),
\qquad
\delta(w):=r(g(w_L,w_H))-\bar r(w_L),
\]
we obtain
\begin{equation}
    q^\star(w)\propto q(w)\exp\big(\bar r(w_L)+\delta(w)\big),
\end{equation}
where $|\delta(w)|\le \epsilon$ by assumption.

The low-frequency-only tilted distribution is defined as
\begin{equation}
    \tilde q^\star(w)\propto q(w)\exp\big(\bar r(w_L)\big).
\end{equation}

Let $Z$ and $\tilde Z$ denote the corresponding normalization constants:
\[
q^\star(w)=\frac{1}{Z}q(w)\exp\big(\bar r(w_L)+\delta(w)\big),
\qquad
\tilde q^\star(w)=\frac{1}{\tilde Z}q(w)\exp\big(\bar r(w_L)\big).
\]
Therefore,
\begin{equation}
    \log \frac{q^\star(w)}{\tilde q^\star(w)}
    =
    \delta(w)+\log\frac{\tilde Z}{Z}.
\end{equation}

Next, using
\[
\tilde q^\star(w)=\frac{1}{\tilde Z}q(w)\exp(\bar r(w_L)),
\]
we can write
\begin{align}
    Z
    &= \int q(w)\exp(\bar r(w_L))\exp(\delta(w))\,dw \\
    &= \tilde Z \int \tilde q^\star(w)\exp(\delta(w))\,dw \\
    &= \tilde Z\,\mathbb E_{\tilde q}[\exp(\delta(w))].
\end{align}
Since $|\delta(w)|\le \epsilon$, we have
\[
e^{-\epsilon}\le e^{\delta(w)}\le e^\epsilon,
\]
and hence
\[
e^{-\epsilon}\le \frac{Z}{\tilde Z}\le e^\epsilon.
\]
Equivalently,
\begin{equation}
    \left|\log\frac{\tilde Z}{Z}\right|\le \epsilon.
\end{equation}

Moreover, since $|\delta(w)|\le \epsilon$, it follows that
\begin{equation}
    -\epsilon \le \mathbb E_{q^\star}[\delta(w)] \le \epsilon.
\end{equation}

Therefore,
\begin{align}
    D_{\mathrm{KL}}(q^\star\|\tilde q^\star)
    &= \mathbb E_{q^\star}\left[\log\frac{q^\star(w)}{\tilde q^\star(w)}\right] \\
    &= \mathbb E_{q^\star}[\delta(w)] + \log\frac{\tilde Z}{Z} \\
    &\le \epsilon + \epsilon \\
    &= 2\epsilon.
\end{align}
\end{proof}

\subsection{(Proof of Proposition \ref{proposition2}) Derivation of Tractable Loss Function} \label{proof_proposition2}

\paragraph{Derivation of KL Divergence Objective.}
We aim to solve the following optimization problem:
\begin{equation}
    D_{\mathrm{KL}}(q_L^\phi \,\|\, \tilde q^\star_L) 
    = \mathbb{E}_{w_L \sim q_L^\phi} 
    \left[ 
    \log \frac{q_L^\phi(w_L)}{\tilde{q}^\star_L(w_L)} 
    \right].
\end{equation}

Introducing the normalization constant $Z$, we can write the low-frequency-only tilted distribution as
\begin{equation}
    \tilde{q}^\star_L(w_L)
    =
    \frac{1}{Z}
    q_L(w_L)\exp(\bar{r}(w_L)).
\end{equation}

Taking the logarithm, we obtain
\begin{equation}
    \log \tilde{q}^\star_L(w_L)
    =
    \log q_L(w_L)
    + \bar{r}(w_L)
    - \log Z.
\end{equation}

Substituting this into the KL divergence, we obtain
\begin{equation}
    D_{\mathrm{KL}}(q_L^\phi \,\|\, \tilde{q}^\star_L)
    =
    \mathbb{E}_{w_L \sim q_L^\phi}
    \left[
    \log q_L^\phi(w_L)
    - \log q_L(w_L)
    - \bar{r}(w_L)
    + \log Z
    \right].
\end{equation}

Separating the terms, we have
\begin{equation}
    D_{\mathrm{KL}}(q_L^\phi \,\|\, \tilde{q}^\star_L)
    =
    \mathbb{E}_{w_L \sim q_L^\phi}
    \left[
    \log q_L^\phi(w_L)
    - \log q_L(w_L)
    \right]
    - \mathbb{E}_{w_L \sim q_L^\phi}[\bar{r}(w_L)]
    + \log Z.
\end{equation}

Thus, the objective can be written as
\begin{equation} \label{appedix:KL_obj}
    D_{\mathrm{KL}}(q_L^\phi \,\|\, \tilde{q}^\star_L)
    =
    D_{\mathrm{KL}}(q_L^\phi \,\|\, q_L)
    - \mathbb{E}_{w_L \sim q_L^\phi}[\bar{r}(w_L)]
    + C,
\end{equation}
where $C = \log Z$ is constant with respect to $\phi$, and can therefore be ignored during optimization.

\paragraph{Practical Approximation of the Low-Frequency Reward.}
In practice, the low-frequency reward $\bar{r}(w_L)$ is intractable to compute, as it requires taking an expectation over the high-frequency components $w_H$. However, under Assumption~\eqref{assumption}, $\bar{r}(w_L)$ can be well approximated by $r(g(w))$ for any $w$, as their discrepancy is uniformly bounded by $\epsilon$. Consequently, we replace $\bar{r}(w_L)$ with $r(g(w))$ in practice, leading to a tractable training objective.

Based on this approximation, we derive the practical training objective from the low-frequency KL formulation:
\begin{equation}
    \mathcal{L}(\phi)
    =
    D_{\mathrm{KL}}(q_L^\phi \,\|\, q_L)
    -
    \mathbb{E}_{w \sim q^\phi}[r(g(w))].
\end{equation}

The true objective in the low-frequency space is given by Eq. \eqref{appedix:KL_obj}. The proposed objective $\mathcal{L}(\phi)$ serves as a surrogate for this quantity, and their discrepancy is uniformly bounded by $\epsilon$. This leads to the following bound:
\begin{equation} \label{appendix:final_train_loss}
    \mathcal{L}(\phi) -\epsilon \lesssim  D_{\mathrm{KL}}(q_L^\phi \| q_L)
     - \mathbb{E}_{w_L \sim q_L^\phi}[\bar{r}(w_L)] \lesssim
     \mathcal{L}(\phi) +\epsilon.
\end{equation}

\paragraph{Exact KL divergence.}
The proof is based on the approach of \cite{eyring2025noise}. We define the transformation induced by the network in the low-frequency space as
\begin{equation}
    T_\phi(w_L) = w_L + h_\phi(w_L),
\end{equation}
where $h_\phi$ is a residual predictor. Let $q_L = \mathcal{N}(0, I)$ denote the standard Gaussian prior over $w_L$, and let $q_L^\phi = (T_\phi)_\# q_L$ denote the pushforward distribution induced by $T_\phi$.

To apply the change-of-variables formula, the transformation $T_\phi$ must be invertible with a continuously differentiable inverse. The following lemma provides a sufficient condition for this property.

\begin{lemma}[Invertibility of the residual map]
\label{lemma:invertibility_vector}
Let $h_\phi : \mathbb{R}^{Nk} \to \mathbb{R}^{Nk}$ be a continuously differentiable function. If there exists a constant $M$ such that
\begin{equation} \label{appendix_jacobian_condition}
        \|J_{h_\phi}(w_L)\|_2 \le M <1
    \quad \text{for all } w_L \in \mathbb{R}^{Nk},
\end{equation}
then $T_\phi$ is a bijection from $\mathbb{R}^{Nk}$ onto $\mathbb{R}^{Nk}$, and its inverse is continuously differentiable.
\end{lemma}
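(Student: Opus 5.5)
The plan is the standard contraction-plus-inverse-function-theorem argument for residual maps. Throughout, write $T_\phi(w_L)=w_L+h_\phi(w_L)$ on $\mathbb{R}^{Nk}$.

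\textbf{Step 1: $h_\phi$ is an $M$-contraction.} By the mean value inequality applied to the segment $w_L^{(t)}=a+t(b-a)$,
\[
\|h_\phi(b)-h_\phi(a)\|=\Bigl\|\int_0^1 J_{h_\phi}(w_L^{(t)})(b-a)\,dt\Bigr\|\le M\|b-a\|.
\]
This uses the uniform spectral-norm bound \eqref{appendix_jacobian_condition} and the $C^1$ assumption, which justifies the fundamental theorem of calculus along the segment.

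\textbf{Step 2: injectivity.} If $T_\phi(a)=T_\phi(b)$, then $b-a=h_\phi(a)-h_\phi(b)$. Hence $\|b-a\|\le M\|b-a\|$, and since $M<1$ this forces $a=b$. The same computation gives the lower Lipschitz bound $\|T_\phi(b)-T_\phi(a)\|\ge(1-M)\|b-a\|$.

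\textbf{Step 3: surjectivity.} This is the step that needs the most care, since global onto-ness does not follow from local information. I will use Banach's fixed point theorem. Fix a target $y\in\mathbb{R}^{Nk}$ and define $\Psi_y(x)=y-h_\phi(x)$. Solving $T_\phi(x)=y$ is the same as finding a fixed point $x=\Psi_y(x)$. By Step 1, $\Psi_y$ is an $M$-contraction on the complete space $\mathbb{R}^{Nk}$, so it has a unique fixed point. Hence $T_\phi$ is a bijection. As a by-product, $T_\phi^{-1}$ is $(1-M)^{-1}$-Lipschitz by Step 2, and in particular continuous.

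\textbf{Step 4: $C^1$ regularity of the inverse.} The Jacobian $J_{T_\phi}=I+J_{h_\phi}$ is invertible everywhere. Indeed, for any $v$, $\|(I+J_{h_\phi})v\|\ge(1-M)\|v\|>0$ when $v\ne0$; equivalently, the Neumann series converges because $\|J_{h_\phi}\|_2\le M<1$. The inverse function theorem then shows that $T_\phi$ is a local $C^1$ diffeomorphism around every point. Because $T_\phi$ is a global bijection, the local inverses coincide with the global inverse $T_\phi^{-1}$. Therefore $T_\phi^{-1}$ is $C^1$, with
\[
J_{T_\phi^{-1}}(y)=\bigl(I+J_{h_\phi}(T_\phi^{-1}(y))\bigr)^{-1}.
\]
This expression is continuous in $y$ because matrix inversion is continuous on invertible matrices and $T_\phi^{-1}$ is continuous. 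This completes the plan.
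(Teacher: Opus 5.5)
Your proposal is correct and follows essentially the same route as the paper: an $M$-Lipschitz bound from the mean value inequality, injectivity via the $(1-M)\|x-y\|$ lower bound, surjectivity via the Banach fixed-point theorem applied to $\Psi_y(x)=y-h_\phi(x)$, and a $C^1$ inverse from the inverse function theorem combined with global bijectivity. The only differences are cosmetic: you show invertibility of $I+J_{h_\phi}$ via the bound $\|(I+J_{h_\phi})v\|\ge(1-M)\|v\|$ rather than the paper's spectral-radius argument, and you additionally write down the Jacobian of $T_\phi^{-1}$ explicitly.
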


\begin{proof}
Since $h_\phi$ is continuously differentiable and satisfies Eq.~\eqref{appendix_jacobian_condition}. It follows from the mean value theorem that $h_\phi$ is $M$-Lipschitz:
\begin{equation}
    \|h_\phi(x) - h_\phi(y)\|
    \le M \|x-y\|
    \qquad \forall x,y \in \mathbb{R}^{Nk}.
\end{equation}

\textit{(i) Injectivity. }
For any $x,y \in \mathbb{R}^{Nk}$,
\begin{equation}
\begin{aligned}
    \|T_\phi(x) - T_\phi(y)\|
    &= \|(x-y) + (h_\phi(x)-h_\phi(y))\| \\
    &\ge \|x-y\| - \|h_\phi(x)-h_\phi(y)\| \\
    &\ge (1-M)\|x-y\|.
\end{aligned}
\end{equation}
Since $1-M > 0$, it follows that $T_\phi(x)=T_\phi(y)$ implies $x=y$. Hence, $T_\phi$ is injective.

\textit{(ii) Surjectivity. }
Fix any $z \in \mathbb{R}^{Nk}$ and define
\begin{equation}
    \Psi_z(x) := z - h_\phi(x).
\end{equation}
Then for any $x,y$,
\begin{equation}
    \|\Psi_z(x)-\Psi_z(y)\|
    = \|h_\phi(x)-h_\phi(y)\|
    \le M\|x-y\|.
\end{equation}
Thus, $\Psi_z$ is a contraction with constant $L < 1$. By the Banach fixed-point theorem, there exists a unique $x^\star$ such that
\begin{equation}
    x^\star = \Psi_z(x^\star) = z - h_\phi(x^\star).
\end{equation}
Rearranging gives
\begin{equation}
    z = x^\star + h_\phi(x^\star) = T_\phi(x^\star).
\end{equation}
Since $z$ is arbitrary, $T_\phi$ is surjective.

\textit{(iii) Differentiability of the inverse. }
The Jacobian of $T_\phi$ is
\begin{equation}
    J_{T_\phi}(w_L) = I + J_{h_\phi}(w_L).
\end{equation}
we have Eq. \eqref{appendix_jacobian_condition}, hence, the spectral radius satisfies
\begin{equation}
    \rho(J_{h_\phi}(w_L)) \le \|J_{h_\phi}(w_L)\|_2 < 1.
\end{equation}

Therefore, for every eigenvalue $\lambda$ of $J_{h_\phi}(w_L)$, we have $|\lambda|<1$, so
\[
1+\lambda \neq 0.
\]
It follows that $I+J_{h_\phi}(w_L)$ is invertible for all $w_L$, i.e.,
\[
\det(J_{T_\phi}(w_L)) \neq 0 \qquad \forall w_L.
\]
By the inverse function theorem, $T_\phi$ admits a locally defined inverse around every point, and this inverse is continuously differentiable. Since Parts (i) and (ii) establish that $T_\phi$ is globally bijective, these local inverses are consistent and define a global inverse $T_\phi^{-1}$, which is also continuously differentiable.
\end{proof}

Under the condition of Lemma~\ref{lemma:invertibility_vector} (i.e., $\|J_{h_\phi}(w_L)\|_2 \le M <1$), we can use change of variables formula. The density of $q_L^\phi$ is given by
\begin{equation}
    q_L^\phi(\hat{w}_L)
    =
    q_L(w_L)\,
    \left|
    \det\!\left(
    \frac{\partial T_\phi(w_L)}{\partial w_L}
    \right)
    \right|^{-1},
\end{equation}
where $\hat{w}_L = T_\phi(w_L)$. Since
\[
\frac{\partial T_\phi(w_L)}{\partial w_L}
=
I + J_{h_\phi}(w_L),
\]
this yields
\begin{equation}
    q_L^\phi(\hat{w}_L)
    =
    q_L(w_L)\,
    \left|
    \det\!\bigl(I + J_{h_\phi}(w_L)\bigr)
    \right|^{-1}.
\end{equation}

Using the definition of KL divergence,
\begin{equation}
    D_{\mathrm{KL}}(q_L^\phi \,\|\, q_L)
    =
    \mathbb{E}_{\hat{w}_L \sim q_L^\phi}
    \left[
    \log \frac{q_L^\phi(\hat{w}_L)}{q_L(\hat{w}_L)}
    \right],
\end{equation}
we substitute the expression of $q_L^\phi(\hat{w}_L)$ to obtain
\begin{equation}
    \log \frac{q_L^\phi(\hat{w}_L)}{q_L(\hat{w}_L)}
    =
    \log q_L(w_L) - \log q_L(\hat{w}_L)
    - \log \left| \det\!\bigl(I + J_{h_\phi}(w_L)\bigr) \right|.
\end{equation}

Since $q_L$ is a standard Gaussian distribution, its log-density is quadratic. 
Thus,
\begin{equation}
    \log q_L(w_L) - \log q_L(\hat{w}_L)
    =
    \frac{1}{2}\|\hat{w}_L\|^2 - \frac{1}{2}\|w_L\|^2.
\end{equation}

Substituting $\hat{w}_L = w_L + h_\phi(w_L)$ and expanding,
\begin{equation}
\begin{aligned}
    \frac{1}{2}\|\hat{w}_L\|^2 - \frac{1}{2}\|w_L\|^2
    &=
    \frac{1}{2}\|w_L + h_\phi(w_L)\|^2 - \frac{1}{2}\|w_L\|^2 \\
    &=
    \frac{1}{2}\|h_\phi(w_L)\|^2 
    + \langle w_L, h_\phi(w_L)\rangle.
\end{aligned}
\end{equation}

The first term corresponds to the magnitude of the residual update, 
while the second term captures the interaction between the input $w_L$ and the update direction.

To simplify the second term, we apply Stein's lemma under the Gaussian base measure $w_L \sim \mathcal N(0,I)$. Assume that $h_\phi$ is continuously differentiable and satisfies
\begin{equation}
    \mathbb{E}\big[\|h_\phi(w_L)\|\big] < \infty,
    \qquad
    \mathbb{E}\big[\|J_{h_\phi}(w_L)\|_2\big] < \infty,
\end{equation}
so that integration by parts applies and the boundary terms vanish under the Gaussian measure. In our setting, these conditions are ensured by the Jacobian bound in Eq.~\eqref{appendix_jacobian_condition}, which implies that $h_\phi$ is Lipschitz and hence has at most linear growth.

Applying Stein's lemma yields
\begin{equation}
    \mathbb{E}\bigl[\langle w_L, h_\phi(w_L)\rangle\bigr]
    =
    \mathbb{E}\bigl[\mathrm{Tr}(J_{h_\phi}(w_L))\bigr].
\end{equation}

Combining the above results, we obtain the exact expression
\begin{equation}
D_{\mathrm{KL}}(q_L^\phi \,\|\, q_L)
=
\mathbb{E}_{w_L \sim q_L}
\left[
\frac{1}{2}\|h_\phi(w_L)\|^2
+
\mathrm{Tr}\bigl(J_{h_\phi}(w_L)\bigr)
-
\log \det\!\bigl(I + J_{h_\phi}(w_L)\bigr)
\right].
\end{equation}

This expression shows that the KL divergence decomposes into a quadratic penalty on $h_\phi$ and a Jacobian-based correction capturing local geometric distortion.

\paragraph{Approximation of the Jacobian term.}

The exact KL expression derived above contains a Jacobian-dependent correction term,
\begin{equation}
    \mathrm{Tr}(J_{h_\phi}(w_L)) - \log \det\!\bigl(I + J_{h_\phi}(w_L)\bigr).
\end{equation}
While this term is necessary for exact density transformation, it is computationally expensive, as it requires evaluating both the Jacobian trace and the log-determinant.

To obtain a tractable objective, we analyze this term under the small-Jacobian assumption in Eq.~\eqref{appendix_jacobian_condition} as in Lemma~\ref{lemma:invertibility_vector}. Let
\begin{equation}
    A := J_{h_\phi}(w_L).
\end{equation}

Since $\|A\|_2 \le M < 1$, its spectral radius satisfies $\rho(A) < 1$, and hence the matrix logarithm admits the convergent expansion
\begin{equation}
    \log(I+A) = \sum_{j=1}^{\infty} \frac{(-1)^{j+1}}{j} A^j.
\end{equation}
Using the identity $\log\det(I+A)=\mathrm{Tr}(\log(I+A))$, we obtain
\begin{equation}
    \log\det(I+A)
    =
    \sum_{j=1}^{\infty} \frac{(-1)^{j+1}}{j}\mathrm{Tr}(A^j).
\end{equation}
Therefore,
\begin{equation}
\mathrm{Tr}(A)-\log\det(I+A)
=
\sum_{j=2}^{\infty}\frac{(-1)^j}{j}\mathrm{Tr}(A^j).
\end{equation}

Taking absolute values and using $|\mathrm{Tr}(A^j)| \le Nk\,\|A\|_2^j \le Nk\,M^j$, we get
\begin{equation}
\left|
\mathrm{Tr}(A)-\log\det(I+A)
\right|
\le
Nk\sum_{j=2}^{\infty}\frac{M^j}{j}
=
Nk\bigl(-\log(1-M)-M\bigr).
\end{equation}

Substituting this bound into the KL expression yields
\begin{equation}
\left|
D_{\mathrm{KL}}(q_L^\phi \,\|\, q_L)
-
\frac{1}{2}\mathbb{E}_{w_L \sim q_L}\|h_\phi(w_L)\|^2
\right|
\le Nk\bigl(-\log(1-M)-M\bigr).
\end{equation}

When $M \ll 1$, the bound admits the second-order approximation
\begin{equation}
    -\log(1-M)-M = \frac{M^2}{2} + O(M^3),
\end{equation}
which becomes negligible as $M \to 0$. Consequently,
\begin{equation}
    D_{\mathrm{KL}}(q_L^\phi \,\|\, q_L)
    \approx
    \frac{1}{2}\mathbb{E}_{w_L \sim q_L}\|h_\phi(w_L)\|^2.
\end{equation}

This shows that, in the small-Jacobian regime, the KL divergence is well approximated by a simple quadratic penalty on the residual update $h_\phi$.

\paragraph{Final practical objective.}

Applying this approximation yields the practical training loss
\begin{equation}
    \mathcal{L}(\phi)
    =
    \mathbb{E}_{w \sim q}
    \left[
    \frac{1}{2}\|h_\phi(w_L)\|^2
    -
    r\big(g(w_L + h_\phi(w_L), w_H)\big)
    \right].
\end{equation}

This objective admits a clear interpretation: the first term acts as a regularizer that penalizes large deviations from the base Gaussian prior in the low-frequency subspace, while the second term encourages updates that improve the reward.

\section{Additional Details on Model Architecture and Loss Function}

\subsection{Model Architecture of the LENS Framework} \label{architecture_explain}

\paragraph{Patch PCA Codec.}
The Patch PCA Codec transforms latent noise into a patch-wise PCA coefficient space and reconstructs it back to the latent space.

Let $z \in \mathbb{R}^{C \times H \times W}$ denote a latent noise representation, where $C$, $H$, and $W$ denote the number of channels, height, and width, respectively. We extract non-overlapping patches using an \texttt{unfold} operation with patch size $s$, producing a sequence of patch vectors:
\begin{equation}
    \mathbf{s} = \{ s_i \}_{i=1}^N, \quad s_i \in \mathbb{R}^{d},
\end{equation}
where $N = \frac{H}{s} \cdot \frac{W}{s}$ and $d = C s^2$.

Each patch is centered using a precomputed mean $\mu \in \mathbb{R}^{d}$:
\begin{equation}
    \tilde{s}_i = s_i - \mu.
\end{equation}

We project the centered patches onto a truncated PCA basis $V' \in \mathbb{R}^{d \times k}$, whose columns are orthonormal (i.e., ${V'}^\top V' = I$):
\begin{equation}
    w_{i,L} = \tilde{s}_i V',
\end{equation}
where $w_{i,L} \in \mathbb{R}^{k}$ are the retained low-frequency PCA coefficients and $k \ll d$.

The high-frequency component is defined as:
\begin{equation}
    y_i = \tilde{s}_i - w_{i,L} {V'}^\top,
\end{equation}
where $y_i$ lies in the orthogonal complement of the subspace spanned by $V'$.

Collecting all patches, we obtain:
\begin{equation}
    w_L = \{ w_{i,L} \}_{i=1}^N \in \mathbb{R}^{N \times k}, \quad 
    y = \{ y_i \}_{i=1}^N \in \mathbb{R}^{N \times d}.
\end{equation}

Given modified coefficients $\hat{w}_{i,L}$, reconstruction is performed as:
\begin{equation}
    \hat{s}_i = \mu + \hat{w}_{i,L} {V'}^\top + y_i.
\end{equation}

The reconstructed patches are merged back into the latent space using a \texttt{fold} operation.

%\textbf{Additional Details on the Architecture}
This formulation is mathematically equivalent to performing a full PCA projection using an orthonormal basis $V \in \mathbb{R}^{d \times d}$, obtaining full coefficients $w_i = \tilde{s}_i V$, modifying only the first $k$ coefficients, and reconstructing via $w_i V^\top$. 

Specifically, let $V = [V', V'']$, where $V'$ contains the first $k$ principal components and $V''$ contains the remaining components. Then, since $V$ is orthonormal, we can decompose:
\begin{equation}
    \tilde{s}_i = w_{i,L} {V'}^\top + w_{i,H} {V''}^\top,
\end{equation}
where $w_{i,H}$ denotes the remaining high-frequency coefficients.

In our formulation, the orthogonal component $y_i$ corresponds exactly to the contribution of the high-frequency components, i.e., the component of $\tilde{s}_i$ that lies in the subspace orthogonal to the span of the top-$k$ principal components $V'$:
\begin{equation}
    y_i = w_{i,H} {V''}^\top.
\end{equation}

Therefore, modifying only $w_{i,L}$ while keeping $y_i$ fixed is equivalent to modulating only the first $k$ PCA coefficients in the full PCA space.

\paragraph{LENS Network.}
The LENS Network is a transformer-based model~\cite{vaswani2017attention} that predicts residual updates to the PCA coefficients. 

The input to the network is a sequence of low-frequency PCA coefficients:
\begin{equation}
    w_L \in \mathbb{R}^{N \times k},
\end{equation}
where $N$ is the number of patches and $k$ is the number of low-frequency coefficients used for modulation. Each token corresponds to a patch and contains its $k$-dimensional PCA coefficient vector. Each coefficient vector is first projected into a hidden dimension $h$:
\begin{equation}
    \mathbf{h}^{(0)} = \mathrm{Linear}(w_L) + \mathbf{P}, \quad \mathbf{h}^{(0)} \in \mathbb{R}^{N \times h},
\end{equation}
where $\mathbf{P}$ is a sinusoidal positional embedding.

Given a text prompt embedding $c$, two conditioning paths are used: (1) a pooled text condition that is projected and added to all patch tokens (bias injection), and (2) token-wise text embeddings used in cross-attention.

Each transformer block applies self-attention, text cross-attention, and a feed-forward network (FFN), with residual connections and layer normalization.

After passing through $L$ transformer blocks, the final hidden representation is projected back to the coefficient space:
\begin{equation}
    \Delta w_L = \mathrm{Linear}(\mathrm{LN}(\mathbf{h}^{(L)})),
\end{equation}
where $\Delta w_L \in \mathbb{R}^{N \times k}$ represents the predicted coefficient updates.

\subsection{Composition of the Reward-based Loss Function} \label{reward_explain}

We employ a composite reward function to guide the optimization of the coefficient modulation network. The overall reward is defined as a weighted combination of multiple complementary scoring functions that capture semantic alignment, human preference, and perceptual quality.

Given a generated image $x$ conditioned on a text prompt $c$, the total reward is defined as
\begin{equation}
r(x, c) = \sum_{i} \lambda_i \, r_i(x, c),
\end{equation}
where $\{r_i\}$ denotes individual reward components and $\{\lambda_i\}$ are weighting coefficients. The proposed reward combines complementary signals: CLIP~\cite{hessel2021clipscore} provides a strong signal for semantic alignment, HPSv2.1 (Human Preference Score)~\cite{wu2023human} captures absolute perceptual quality, ImageReward~\cite{xu2023imagereward} promotes visual fidelity by penalizing generative artifacts, and PickScore~\cite{kirstain2023pick} models relative human preference. This combination yields a more stable and well-balanced optimization signal compared to relying on a single reward model.

In practice, we combine four reward functions:
\begin{equation}
r(x, c) =
\lambda_{\text{CLIP}} r_{\text{CLIP}}(x, c)
+
\lambda_{\text{HPS}} r_{\text{HPS}}(x, c)
+
\lambda_{\text{IR}} r_{\text{IR}}(x, c)
+
\lambda_{\text{Pick}} r_{\text{Pick}}(x, c).
\end{equation}
We set the weights as $\lambda_{\text{CLIP}} = 0.01$, $\lambda_{\text{HPS}} = 5.0$, $\lambda_{\text{IR}} = 1.0$, and $\lambda_{\text{Pick}} = 0.05$.

\paragraph{CLIP Score.}
The CLIP reward \cite{hessel2021clipscore} measures semantic alignment by computing the cosine similarity between image and text embeddings in a shared latent space. It encourages the generated image to remain consistent with the input prompt.

\paragraph{HPSv2.1.}
HPSv2.1 \cite{wu2023human} is a model fine-tuned on human preference data to predict aesthetic quality. It captures perceptual attributes such as composition, lighting, and detail.

\paragraph{ImageReward.}
ImageReward \cite{xu2023imagereward} is trained on large-scale human feedback to evaluate overall image quality conditioned on text prompts. It provides a scalar reward reflecting both visual fidelity and semantic alignment.

\paragraph{PickScore.}
PickScore \cite{kirstain2023pick} is trained on pairwise human preference data using a Bradley--Terry objective. It predicts relative preference between images for the same prompt, capturing fine-grained ranking signals.

\newpage

\section{Experimental Details} \label{appendix_implementation_detail}
\subsection{Implementation Details}
\paragraph{PCA Basis Extraction.}
Before training each model, we precompute the PCA basis for latent patches. To this end, we use ImageNetV2 (MIT License)~\cite{recht2019imagenet}, a dataset designed to evaluate generalization performance. ImageNetV2 preserves the same 1,000-class structure as ImageNet \cite{russakovsky2015imagenet} while consisting of 10 newly collected images per class, resulting in a total of 10,000 images without overlap with the original training data. By covering a wide range of classes and visual variations, it provides a reasonable approximation of natural image statistics while mitigating overfitting to a specific dataset distribution.

The PCA basis is computed as follows. First, images are encoded into latent representations using the VAE of the corresponding model. The latent features are then divided into patches according to a predefined patch size and vectorized. Next, the mean vector of the patches is computed, and each patch is centered by subtracting the mean. Based on these centered vectors, a covariance matrix is constructed, from which eigenvectors are obtained. These eigenvectors are used as the PCA basis during both training and inference for each model.

\paragraph{Models and Baselines.}
In this work, we employ three distilled diffusion models: SD-Turbo\footnote{\url{https://huggingface.co/stabilityai/sd-turbo}} \cite{sauer2024adversarial}, Hyper-SDXL\footnote{\url{https://huggingface.co/ByteDance/Hyper-SD}} \cite{ren2024hyper}, and SANA-Sprint\footnote{\url{https://huggingface.co/Efficient-Large-Model/Sana_Sprint_0.6B_1024px_diffusers}} \cite{chen2025sana}. For Hyper-SDXL, we use the 1-step SDXL UNet checkpoint provided in the official repository. 

As baselines, we employ ReNo (MIT License)\footnote{\url{https://github.com/ExplainableML/ReNO}}
 \cite{eyring2024reno} and HyperNoise (MIT License)\footnote{\url{https://github.com/ExplainableML/HyperNoise}}
 \cite{eyring2025noise}.

ReNo is a test-time optimization method that refines latent representations during inference to better align generated samples with the target distribution. Specifically, it performs iterative optimization at test time without modifying the model parameters. In our experiments, we set the number of optimization iterations to 20 and follow the official implementation for all other hyperparameters. For SANA-Sprint, we reimplement ReNo based on the original codebase to ensure compatibility with these models.

HyperNoise is a HyperNetwork-based approach that learns to generate input-conditioned noise modulations to steer the denoising process toward better quality samples. We follow the official implementation for all hyperparameters, including setting the LoRA rank to 128. Since the original implementation does not support Hyper-SDXL, we reimplement HyperNoise for this model based on the provided codebase.

\paragraph{Settings.}
In LENS, the base generator is pretrained and kept frozen. In the trainable LENS network, most layers use the default PyTorch initialization, while the final output projection is zero-initialized, and the cross-attention gate logits are initialized to -2.0 (gain $\approx 0.119$), resulting in near-zero initial perturbations under the default modulation setting. This design is intended to stabilize training by starting from minimal perturbations, with updates initially focusing on the output layer and gradually propagating to the entire network, thereby reducing variance and improving convergence in reward-based learning.

Across all experiments, the guidance scale is set to 0 during both training and evaluation. For all three models—SD-Turbo, Hyper-SDXL, and SANA-Sprint—the LENS network is trained at a resolution of 512×512. This choice is primarily motivated by computational efficiency and resource constraints, while maintaining adequate text–image alignment. During evaluation, images are generated at resolutions of 512, 1024, and 1024, respectively. The number of generation steps is fixed at 1 for both training and evaluation, except for the ablation experiments in the Appendix where the number of steps is varied.

When evaluating on GenEval2~\cite{kamath2025geneval}, we adopt Soft-TIFA AM as the scoring metric and generate images using different seeds for each prompt to mitigate dependence on random seed selection. To further reduce statistical variance, we repeat the experiments with three different base seeds and report the averaged results, where each base seed defines a distinct set of per-prompt seeds.

All training and experiments were conducted using Python 3.11.0 on NVIDIA RTX 3090 GPUs, with a 64-core Intel Xeon Gold 6226R CPU. The training data type was set to bfloat16 (bf16) by default. To increase the effective batch size without significantly increasing VRAM usage, gradient accumulation was employed. Additionally, gradient checkpointing was applied to both the text encoder and parts of the reward model to improve memory efficiency. When using two RTX 3090 GPUs, training each LENS network typically required approximately 4 to 6 days, although the exact duration varied slightly depending on the model architecture and overall GPU utilization.

\subsection{Hyperparmeters}
The hyperparameters used to train our LENS Network for each model are summarized in Table \ref{appendix_hyperparameter}.

\begin{table}[!ht]

\centering
\resizebox{0.92\textwidth}{!}{
%\scriptsize
\renewcommand{\arraystretch}{1.2}
\begin{tabular}{lcccc}
\toprule
 & \multicolumn{3}{c}{\textbf{Low-frequency Eigen Noise Shaping (LENS)}} \\
\cmidrule(lr){2-4} 
Model 
& SD-Turbo 
& Hyper-SDXL
& SANA-Sprint    \\
\midrule
Patch size & 4 & 4 & 4\\
Number of low-rank coefficients ($k$) & 32 & 32 & 256\\
Number of layers (blocks) & 4 & 4 & 8\\
Number of attention head & 4 & 4 & 8\\
GradNorm Clipping & 1.0 & 1.0 & 1.0 \\
Learning rate & $2e{-5}$ & $1e{-4}$ & $1e{-3}$\\
Optimizer & adamw & adamw & adamw\\
Batch size & 2 & 2 & 4 \\
Accumulation steps & 15 & 15 & 6 \\
Training epochs & $\approx 15$ & $\approx 20$ & $\approx 20$  \\
Number of training prompts & $\approx 70k$ & $\approx 70k$ & $\approx 70k$  \\
\bottomrule
\end{tabular}
}
\vspace{3pt}
\caption{Hyperparameters for training the LENS network}

\label{appendix_hyperparameter}
\end{table}

\section{Additional Experimental Results} \label{additional_exp}

\subsection{Effect of the Number of Low-frequency Coefficients} \label{additional_exp_k}

We conduct an ablation study on the number of low-frequency coefficients $k$ using the SD-Turbo model. The patch size is fixed at $s=4$, and only $k$ is varied during training. For SD-Turbo, the latent channel dimension is 4, resulting in a full dimension of $d = 64$. Images are generated using the LENS framework trained and evaluated on the GenEval2 benchmark. Table~\ref{appendix_table_k} summarizes the results for each metric, and Figure~\ref{appendix_figure_k} compares the GenEval2 mean scores.

As shown, the best performance is achieved at $k = 32$. Increasing $k$ toward the full dimension $d$ degrades performance, as it involves modifying an unnecessarily large number of components. This suggests that effective noise modulation benefits from focusing on a restricted subspace, where limiting the degrees of freedom leads to more stable and efficient optimization.

\begin{table}[!ht]
\centering

\renewcommand{\arraystretch}{1.3} 
\setlength{\tabcolsep}{10pt}   

\begin{tabular}{c|ccccc}
\hline
\textbf{$k$} & \textbf{Object $\uparrow$} & \textbf{Attribute $\uparrow$} & \textbf{Count $\uparrow$} & \textbf{Position $\uparrow$} & \textbf{Verb $\uparrow$} \\
\hline
8  & 73.26 & 42.06 & 24.59 & 26.36 & 3.24 \\
16 & 73.50 & 40.83 & 23.91 & 28.17 & 1.53 \\
32 & \textbf{75.07} & \textbf{43.08} & \textbf{26.08} & \textbf{28.46} & \textbf{5.78} \\
64 & 74.02 & 42.55 & 25.05 & 28.21 & 5.33 \\
\hline
\end{tabular}
\vspace{3pt}
\caption{Effect of the number of PCA coefficients $k$ on GenEval2 results}
\label{appendix_table_k}
\end{table}

\vspace{-12pt}

\begin{figure} [!ht]
    \centerline{\includegraphics[width=0.7\linewidth]{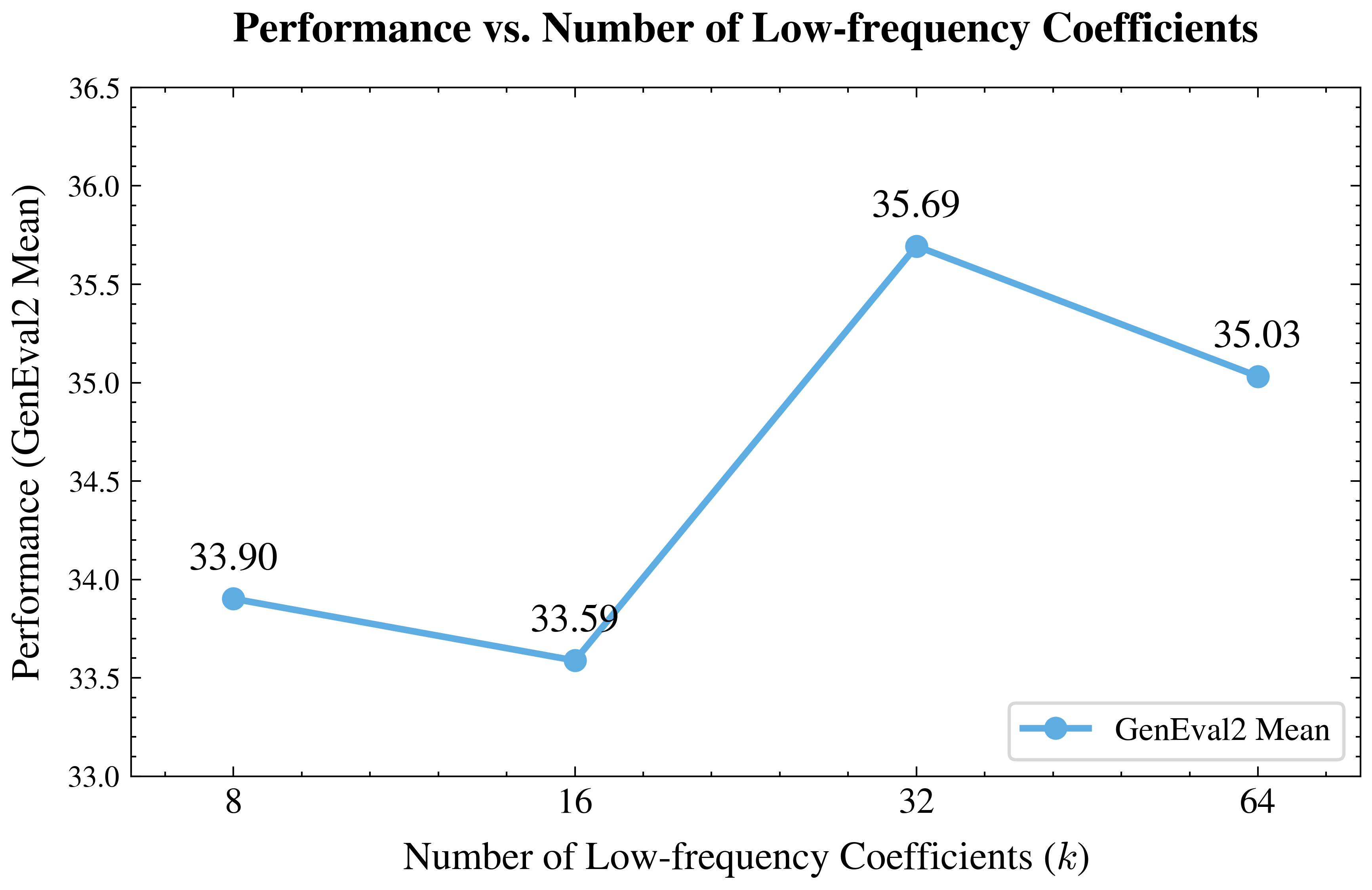}}
    \caption{Effect of the number of low-frequency coefficients $k$ on GenEval2 mean}
    \label{appendix_figure_k}
\end{figure}

\subsection{Effect of the Patch Size} \label{additional_exp_s}
We conduct an ablation study on the patch size $s$ using the SD-Turbo model. The number of low-frequency coefficients $k$ is fixed, and only $s$ is varied.  Images are generated using the LENS framework trained and evaluated on the GenEval2 benchmark. Table~\ref{appendix_table_s} summarizes the results for each metric, and Figure~\ref{appendix_figure_s} compares the GenEval2 mean scores.

The results show that the best performance is achieved at $s = 4$. This suggests that a moderate patch size strikes a good balance between preserving local details and capturing global structure, leading to improved performance.

\begin{table}[!ht]
\centering
\renewcommand{\arraystretch}{1.3} 
\setlength{\tabcolsep}{10pt}   
\begin{tabular}{c|ccccc}
\hline
\textbf{$s$} & \textbf{Object $\uparrow$} & \textbf{Attribute $\uparrow$} & \textbf{Count $\uparrow$} & \textbf{Position $\uparrow$} & \textbf{Verb $\uparrow$} \\
\hline
2  & 73.71 & 42.37 & 23.16 & 28.24 & 4.35 \\
4 & \textbf{75.07} & \textbf{43.08} & \textbf{26.08} & \textbf{28.46} & \textbf{5.78} \\
8 & 73.59 & 41.06 & 23.44 & 28.27 & 3.67 \\
16 & 72.65 & 40.41 & 22.30 & 28.05 & 3.68 \\
\hline
\end{tabular}
\vspace{3pt}
\caption{Effect of the patch size $s$ on GenEval2 results}
\label{appendix_table_s}
\end{table}

\vspace{-12pt}

\begin{figure} [!ht]
    \centerline{\includegraphics[width=0.7\linewidth]{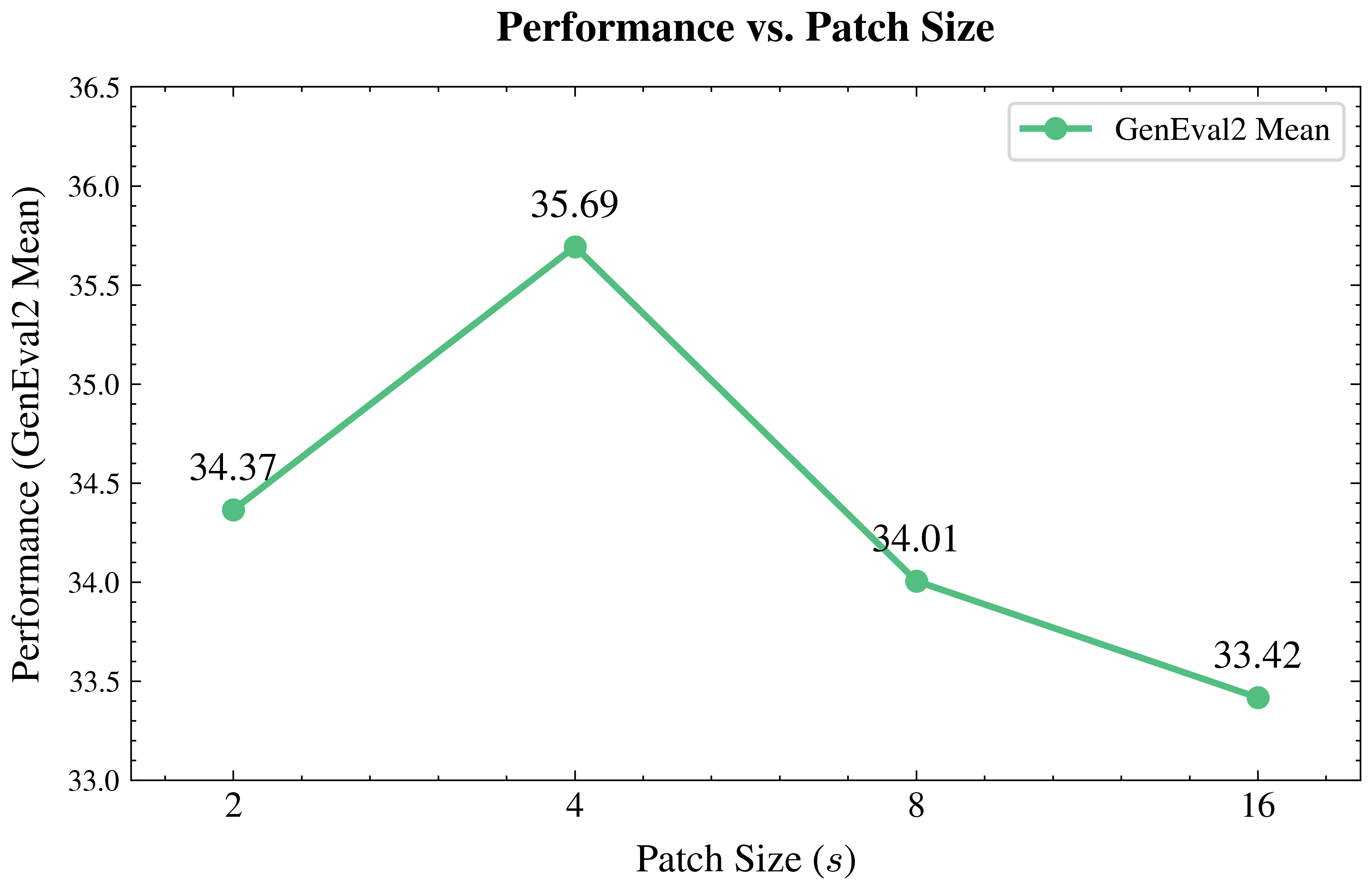}}
    \caption{Effect of the patch size $s$  on GenEval2 mean}
    \label{appendix_figure_s}
\end{figure}

\newpage
\subsection{Effect of the Number of Transformer Layers} \label{additional_exp_layer}

\begin{wraptable}{r}{0.5\textwidth}
\vspace{-12pt}
\centering
\scriptsize
\resizebox{\linewidth}{!}{
\begin{tabular}{lcc}
\toprule 
\textbf{SD-Turbo + LENS}  & \textbf{GenEval2 Mean $\uparrow$} \\
\midrule
4 Layers & \textbf{35.70}  \\
8 Layers & 34.80   \\
12 Layers & 33.10   \\
\bottomrule
\end{tabular}
}
\vspace{-5pt}
\caption{Effect of the number of layers}
\label{appendix_layer}
\vspace{-10pt}
\end{wraptable}
Table~\ref{appendix_layer} presents the performance of LENS with varying numbers of Transformer layers for the SD-Turbo model. As the number of layers increases, the GenEval mean score decreases from 35.70 to 33.10. This indicates that deeper architectures do not necessarily yield better performance. Since LENS operates on low-dimensional representations, increasing depth may introduce redundant transformations or hinder effective feature utilization across layers. Overall, these results highlight the importance of selecting an appropriate model depth, rather than naively increasing the number of layers, in the design of the LENS architecture.

\subsection{Multi-step Analysis} \label{additional_exp_multi}

\begin{wraptable}{r}{0.45\textwidth}
\vspace{-10pt}
\centering
\scriptsize
\resizebox{\linewidth}{!}{
\begin{tabular}{lcc}
\toprule 
\textbf{SD-Turbo}  & \textbf{GenEval2 Mean $\uparrow$} \\
\midrule
One-step  & 33.95  \\
+ HyperNoise & 34.96  \\
\textbf{+ LENS (Ours)}  & \textbf{35.69}  \\
\midrule
Two-step  & 30.92  \\
+ HyperNoise & 32.95  \\
\textbf{+ LENS (Ours)} & \textbf{33.29}  \\
\midrule
Four-step  &  29.82  \\
+ HyperNoise  & \textbf{31.72}  \\
\textbf{+ LENS (Ours)}  & 31.00  \\
\midrule
Eight-step  &  29.16  \\
+ HyperNoise  & 30.58  \\
\textbf{+ LENS (Ours)}  & \textbf{30.61}  \\
\midrule
Sixteen-step  &  29.10  \\
+ HyperNoise  & \textbf{30.57}  \\
\textbf{+ LENS (Ours)}  & 30.10  \\
\bottomrule
\end{tabular}
}
\vspace{-4pt}
\caption{Step-wise performance}
\label{appendix_step}
\vspace{-50pt}
\end{wraptable}

While the main paper focuses on one-step generation, we further analyze performance across varying numbers of inference steps. Table~\ref{appendix_step} shows the performance trends as the number of generation steps increases. LENS consistently outperforms both random noise initialization and HyperNoise in terms of GenEval2 mean across all settings, except at 4 and 16 steps, where it remains comparable to HyperNoise while still surpassing random noise.

Notably, LENS achieves the largest performance gain in the one-step setting, attaining the highest score of 35.69. As the number of inference steps increases, performance slightly degrades. We attribute this trend to the design of the underlying model, SD-Turbo, which is specifically optimized through distillation for one-step generation. Consequently, while LENS remains competitive in multi-step settings, it achieves the best performance and efficiency when used with fewer inference steps.

\vspace{35pt}

\subsection{Results on Other Benchmarks} \label{additional_exp_other}
\paragraph{T2I-CompBench}
Table~\ref{tab:t2i_compbench_results} presents quantitative evaluation results on T2I-CompBench~\cite{huang2023t2i}. LENS achieves competitive performance, closely matching HyperNoise with only a marginal gap, while significantly outperforming the Random Noise baseline. Notably, despite its substantially faster generation speed, LENS maintains comparable performance to HyperNoise, resulting in a more favorable performance--efficiency trade-off.

These results demonstrate that LENS provides an effective and resource-efficient solution for ensuring semantic alignment in text-to-image generation.
\begin{table}[!ht]
\centering
\resizebox{\textwidth}{!}{
\scriptsize
\begin{tabular}{l ccccc |c}
\toprule
\textbf{Model} & \textbf{Color} $\uparrow$ & \textbf{Shape} $\uparrow$ & \textbf{Texture} $\uparrow$ & \textbf{Spatial} $\uparrow$ & \textbf{Numeracy} $\uparrow$ & \textbf{Mean} $\uparrow$ \\
\midrule
SD-Turbo & 0.56 & 0.43 & 0.53 & 0.14 & 0.50 & 0.43 \\
+ HyperNoise & \textbf{0.58} & \textbf{0.46} & \textbf{0.57} & \textbf{0.19} &\textbf{0.53} & \textbf{0.47} \\
\textbf{+ LENS (Ours)} & 0.57 & \textbf{0.46} & 0.55 & 0.18 & 0.52 & 0.45 \\

\bottomrule
\end{tabular}
}
\vspace{3pt}
\caption{Results on T2I-CompBench}
\label{tab:t2i_compbench_results}
\end{table}

\paragraph{DPG-Bench}

\begin{wraptable}{r}{0.45\textwidth}

\centering
\scriptsize
\resizebox{\linewidth}{!}{
\begin{tabular}{lcc}
\toprule 
\textbf{Method}  & \textbf{DPG-Bench Score $\uparrow$} \\
\midrule
SD-Turbo  & 70.94  \\
+ HyperNoise & \textbf{72.40} \\
\textbf{+ LENS (Ours)}  & 72.33  \\
\bottomrule
\end{tabular}
}
\vspace{-4pt}
\caption{Results on DPG-Bench}
\label{tab:dpg}
\vspace{-5pt}
\end{wraptable}

Table~\ref{tab:dpg} presents the DPG-Bench~\cite{hu2024ella} scores for the SD-Turbo model. LENS achieves highly competitive performance, closely matching HyperNoise with only a marginal gap. Notably, despite its substantially faster generation speed, LENS maintains comparable semantic alignment to the strongest baseline, highlighting its superior practical efficiency.

\subsection{Standard Deviations of GenEval2 Scores} \label{appendix_std}
Table~\ref{tab:geneval_results} reports GenEval2 results obtained by averaging over three runs  with different base seeds to mitigate dependence on specific random seeds. Table~\ref{tab:geneval_std} provides the corresponding standard deviations.

\begin{table}[!ht]
\centering
\resizebox{\textwidth}{!}{
\tiny
\renewcommand{\arraystretch}{1.0}
\begin{tabular}{l  ccccc |c}
\toprule
\textbf{Model}  & \textbf{Object} & \textbf{Attribute} & \textbf{Count} & \textbf{Position} & \textbf{Verb} & \textbf{Mean} \\
\midrule
SD-Turbo & 0.29 & 0.30 & 0.21 & 0.32 & 0.21 & 0.24 \\
+ ReNo & 0.38 & 0.42 & 0.33 & 0.37 & 0.27 & 0.35 \\
+ HyperNoise  & 0.40 & 0.48  & 0.39 & 0.36 & 0.25 & 0.33 \\
+ LENS (Ours) & 0.37 & 0.40 & 0.36 & 0.40 & 0.23 & 0.35 \\

\midrule
Hyper-SDXL & 0.40 & 0.43 & 0.32 & 0.29 & 0.20 & 0.32 \\
+ ReNo & 0.45 & 0.43 & 0.35 & 0.37 & 0.16 & 0.35 \\
+ HyperNoise  & 0.40 & 0.44  & 0.38 & 0.39 & 0.25 & 0.32 \\
+ LENS (Ours) & 0.42 & 0.41 & 0.36 & 0.33 & 0.21 & 0.35 \\

\midrule
SANA-Sprint & 0.47 & 0.44 & 0.41 & 0.46 & 0.56 & 0.42 \\
+ ReNo & 0.44 & 0.43 & 0.42 & 0.52 & 0.53 & 0.47 \\
+ HyperNoise  & 0.43 & 0.44  & 0.38 & 0.50 & 0.74 & 0.45 \\
+ LENS (Ours) & 0.41 & 0.43 & 0.43 & 0.49 & 0.61 & 0.48 \\

\bottomrule
\end{tabular}
}
\vspace{3pt}
    \caption{Standard deviations of GenEval2 results in Table~\ref{tab:geneval_results}}
\label{tab:geneval_std}
\end{table}

\section{Broader Impact} \label{broader}
Generative models have been widely studied and increasingly adopted across various domains, leading to substantial societal impact. This work contributes to improving the quality and computational efficiency of such models, enabling higher-quality generation with reduced computational cost. In this regard, our approach may facilitate broader accessibility to generative modeling, particularly in resource-constrained settings.

At the same time, we do not expect our work to introduce fundamentally new societal risks beyond those already associated with existing generative models. However, inherent risks such as the potential misuse for generating misleading or synthetic content, including misinformation, remain relevant. These concerns are not unique to our method but are shared across the broader class of generative modeling techniques. Responsible deployment and appropriate usage guidelines are therefore important to mitigate such risks.

\end{document}